\def\1{\bm{1}}
\def\vtheta{{\bm{\theta}}}
\def\vTheta{{\bm{\Theta}}}
\def\valpha{{\bm{\alpha}}}
\def\vphi{{\bm{\phi}}}
\def\vf{{\bm{f}}}
\def\vh{{\bm{h}}}
\def\vo{{\bm{o}}}
\def\vu{{\bm{u}}}
\def\vx{{\bm{x}}}
\def\vy{{\bm{y}}}
\def\vz{{\bm{z}}}
\DeclareMathAlphabet{\mathsfit}{\encodingdefault}{\sfdefault}{m}{sl}
\SetMathAlphabet{\mathsfit}{bold}{\encodingdefault}{\sfdefault}{bx}{n}
\def\gA{{\mathcal{A}}}
\def\gD{{\mathcal{D}}}
\def\gF{{\mathcal{F}}}
\def\gH{{\mathcal{H}}}
\def\gN{{\mathcal{N}}}
\def\gO{{\mathcal{O}}}
\def\gQ{{\mathcal{Q}}}
\newcommand{\E}{\mathbb{E}}
\newcommand{\Ls}{\mathcal{L}}
\DeclareMathOperator*{\argmax}{arg\,max}
\DeclareMathOperator*{\argmin}{arg\,min}
\DeclareMathOperator{\diag}{diag}
\newtheorem{proposition}{Proposition}
\newtheorem*{proposition*}{Proposition}
\newtheorem*{remark}{Remark}
\newcommand{\rom}[1]{%
  \textup{\uppercase\expandafter{\romannumeral#1}}%
}
\title{Neural Ensemble Search via Bayesian Sampling}
\author[1]{Yao Shu}
\author[1]{Yizhou Chen}
\author[1]{Zhongxiang Dai}
\author[1]{Bryan Kian Hsiang Low}
\affil[1]{%
    Department of Computer Science\\
    National University of Singapore\\
    Singapore
}
\begin{document}
\maketitle

\begin{abstract}
Recently, \emph{neural architecture search} (NAS) has been applied to automate the design of neural networks in real-world applications. A large number of algorithms have been developed to improve the search cost or the performance of the final selected architectures in NAS. Unfortunately, these NAS algorithms aim to select only \emph{one single} well-performing architecture from their search spaces and thus have overlooked the capability of \emph{neural network ensemble} (i.e., an ensemble of neural networks with diverse architectures) in achieving improved performance over a single final selected architecture.
To this end, we introduce a novel neural ensemble search algorithm, called \emph{neural ensemble search via Bayesian sampling} (NESBS), to effectively and efficiently select well-performing neural network ensembles from a NAS search space. In our extensive experiments, NESBS algorithm is shown to be able to achieve improved performance over state-of-the-art NAS algorithms while incurring a comparable search cost, thus indicating the superior performance of our NESBS algorithm over these NAS algorithms in practice.
\end{abstract}

\section{Introduction}

Recent years have witnessed a surging interest in designing well-performing architectures for different tasks. These architectures are typically manually designed by human experts, which requires numerous trials and errors during this manual design process and therefore is prohibitively costly. Consequently, the increasing demand for developing well-performing architectures in different tasks makes this manual design infeasible. To avoid such human efforts, \citet{nas} have introduced \emph{neural architecture search} (NAS) to help automate the design of architectures. Since then, a number of NAS algorithms \citep{enas, darts, p-darts} have been developed to improve the search efficiency (i.e., search cost) or the search effectiveness (i.e., generalization performance of their final selected architectures) in NAS.

However, conventional NAS algorithms aim to select only \emph{one single architecture} from their search spaces and have thus overlooked the capability of other candidate architectures from the same search spaces in helping improve the performance achieved by their final selected single architecture. 
That is, \emph{neural network ensembles} are widely known to be capable of achieving an improved performance compared with a single neural network in practice \citep{adanet, mc-dropout, deepens}.
This naturally begs the question: \emph{How to select best-performing neural network ensembles with diverse architectures from a NAS search space in order to improve the performances achieved by existing NAS algorithms?}
To the best of our knowledge, only limited efforts (e.g., \citep{nes}) have been devoted to this problem in the NAS literature. Unfortunately, the \emph{neural ensemble search} (NES) algorithm based on random search or evolutionary algorithm in \citep{nes} requires excessive search costs to select their final neural network ensembles, which will not be affordable in resource-constrained scenarios.

To this end, this paper introduces a novel algorithm, namely \emph{neural ensemble search via Bayesian sampling} (NESBS), to effectively and efficiently select the well-performing neural network ensemble with diverse architectures from a search space.
We firstly represent the search space as a supernet following conventional one-shot NAS algorithms and then use the model parameters inherited from this supernet after its model training to estimate the single-model performances and also the ensemble performance of independently trained architectures (Sec. \ref{sec:oneshot}). Next, since both single-model performances and diverse model predictions affect the final ensemble performance according to \citep{zhou-ensemble}, we propose to use a variational posterior distribution of architectures based on a trained supernet to characterize these two factors, i.e., single-model performances and diverse model predictions (Sec. \ref{sec:posterior}). We then introduce two novel Bayesian sampling algorithms based on the posterior distribution of architectures, i.e., \emph{Monte Carlo sampling} (MC Sampling) and \emph{Stein Variational Gradient Descent with regularized diversity} (SVGD-RD), to effectively and efficiently select ensembles with both competitive single-model performances and compelling diverse model predictions (Sec. \ref{sec:sampling}), which is also guaranteed to be able to achieve impressive ensemble performances \citep{zhou-ensemble}. Lastly, we use extensive experiments to show that our NESBS algorithm is indeed able to select well-performing neural network ensembles effectively and efficiently in practice (Sec.~\ref{sec:exps}).

\section{Related Works \& Background}

\subsection{Neural Architecture Search}
In the literature, many NAS algorithms \citep{amoebanet, nas, nasnet} have been developed to automate the design of well-performing neural architectures. However, these NAS algorithms are inefficient in practice due to their requirement of the independent model training for each candidate architecture in the search space. To reduce such training costs, a supernet has been introduced to represent the search space and also share model parameters among the candidate architectures in the search space \citep{enas}. As a result, only the model training of this supernet is required, which can significantly improve the search efficiency of conventional NAS algorithms. After that, a number of one-shot NAS algorithms based on model parameter sharing \citep{p-darts, sdarts, darts-, darts, snas} have been developed. 
Unfortunately, these algorithms aim to select \emph{only one single architecture} from their search spaces. Thus, the capability of other candidate architectures from the same search spaces in helping improve the performance of their final selected single architecture have been overlooked.

\subsection{Neural Network Ensembles}
Meanwhile, neural network ensembles have been widely applied to improve the performance of a single neural network in different applications \citep{eml}. Over the years, a number of methods have been proposed to construct such neural network ensembles. For example, \citet{mc-dropout} have proposed to use Monte Carlo Dropout to obtain neural network ensembles at test time. Meanwhile, \emph{deep ensembles} (DeepEns) \citep{deepens} adopt neural networks trained with different random initializations to construct ensembles and has achieved impressive performances on various tasks. 
Another line of ensemble works uses the checkpoints obtained during model training to build neural network ensembles \citep{snapshot}. More recently, \citet{nes} have introduced \emph{neural ensemble search} (NES) into NAS area to build well-performing neural network ensembles by selecting diverse architectures from the NAS search space, which has achieved competitive performance even compared with other ensemble methods. Unfortunately, the algorithm presented in \citep{nes} is shown to be prohibitively costly, which will not be affordable in resource-constrained scenarios. To this end, this paper presents a novel NESBS algorithm to advance this line of works (e.g., NES) by achieving state-of-the-art performances for neural network ensembles with diverse architectures while incurring a reduced search cost.

\subsection{Stein Variational Gradient Descent}\label{sec:bg-svgd}
\emph{Stein Variational Gradient Descent} (SVGD) \citep{svgd} is a variational inference algorithm that approximates a target distribution $p(\vx)$ with a simpler density $q^*(\vx)$ in a predefined set $\gQ$ by minimizing the \emph{Kullback-Leibler} (KL) divergence between these two densities:
\begin{equation}\label{eq:min-kl}
    q^* = \argmin_{q \in \gQ} \{\text{KL}(q || p)\triangleq\E_q\left[\log\left(q(\vx) / p(\vx)\right)\right]\} \ .
\end{equation}
Specifically, SVGD represents $q^*(\vx)$ with a set of particles $\{\vx_i\}_{i=1}^{n}$ which are firstly randomly initialized and then iteratively updated with updates $\vphi^*(\vx_i)$ and a step size $\epsilon$:
\begin{equation}\label{eq:svgd-iter}
    \vx_i \leftarrow \vx_i + \epsilon\vphi^*(\vx_i) \quad \text{for}\ i = 1,\ldots,n \ .
\end{equation}
Let $q_{[\epsilon\vphi]}$ denote the distribution of updated particles $\vx' = \vx + \epsilon\vphi(\vx)$. Let $\mathbb{F}$ denote the unit ball of a vector-valued \emph{reproducing kernel Hilbert space} (RKHS) $\gH\triangleq\gH_0 \times \ldots \times\gH_0$ where $\gH_0$ is an RKHS formed by scalar-valued functions associated with a positive definite kernel $k(\vx, \vx')$. The work of \citet{svgd} has shown that \eqref{eq:svgd-iter} can be viewed as functional gradient descent in the RKHS $\gH$ and the optimal $\vphi^*$ in \eqref{eq:svgd-iter} can be obtained by solving the following problem:
\begin{equation}
    \vphi^* = \argmax_{\vphi \in \mathbb{F}}\left\{-\frac{d}{d\epsilon}\text{KL}(q_{[\epsilon\vphi]} || p)\Bigr|_{\epsilon=0}\right\} \ ,
\end{equation}
which yields a closed-form solution: 
\begin{equation}
    \vphi^*(\cdot) = \E_{\vx\sim q}[k(\vx, \cdot)\nabla_{\vx}\log p(\vx) + \nabla_{\vx}k(\vx, \cdot)] \ .
\end{equation}
In practice, \citet{svgd} have approximated the expectation in this closed-form solution with the empirical mean of particles: $\vphi^*(\vx_i) \approx \widehat{\vphi}^*(\vx_i)$ where $\widehat{\vphi}^*(\vx_i)$ is defined as
\begin{equation}\label{eq:svgd-approx}
    \widehat{\vphi}^*(\vx_i) \triangleq \\
    \frac{1}{n}\sum_{j=1}^n k(\vx_j, \vx_i)\nabla_{\vx_j}\log p(\vx_j) + \nabla_{\vx_j} k (\vx_j, \vx_i) \ .
\end{equation}
As revealed in \citep{svgd}, the two terms in the aforementioned closed-form solution take different effects: The first term with $\nabla_{\vx}\log p(\vx)$ favors particles with higher probability density, while the second term pushes the particles away from each other to encourage diversity.

\section{Neural Ensemble Search via Bayesian Sampling}
Contrary to the selection of one single architecture in conventional NAS algorithm, this paper focuses on the problem of selecting a well-performing neural network ensemble with diverse architectures from the NAS search space, i.e., \emph{neural ensemble search} (NES) \citep{nes}. Let $\vf_{\gA}(\vx, \vtheta_{\gA})$ denote the output of an architecture $\gA$ with input data $\vx$ and model parameter $\vtheta_{\gA}$,  $S$ be a set of architectures, $\vTheta_S$ be a set of the corresponding model parameters of these architectures, and $\Ls_{\text{train}}$ and $\Ls_{\text{val}}$ denote the training and validation losses, respectively. Given the ensemble scheme $\gF_S(\vx, \vTheta_S)\triangleq n^{-1}\textstyle\sum_{\gA \in S}\vf_{\gA}(\vx, \vtheta_{\gA})$ with an ensemble size of $|S|=n$,\footnote{We apply this ensemble scheme for simplicity. Other ensemble schemes can also be used in the algorithm of this paper.} NES can be formally framed as
\begin{equation}\label{eq:nes}
\begin{gathered}
    \min_{S} \Ls_{\text{val}}(\gF_S(\vx, \vTheta_S^*)) \\
    \text{s.t.} \; \forall  \vtheta^*_{\gA} \in \vTheta^*_S \quad \vtheta^*_{\gA} = \argmin_{\vtheta_{\gA}} \Ls_{\text{train}}(\vf_{\gA}(\vx, \vtheta_{\gA})) \ .
\end{gathered}
\end{equation}

Unfortunately, \eqref{eq:nes} is challenging to solve mainly due to the following two reasons: (\rom{1}) The enormous number of candidate architectures in the NAS search space  (e.g., ${\sim}10^{25}$ in the DARTS search space \citep{darts}) makes the independent model training of every candidate architecture (i.e., lower-level optimization in \eqref{eq:nes}) unaffordable. (\rom{2}) The ensemble search space is exponentially increasing in the ensemble size $n$: For example, there are ${\sim}m^n$ different ensembles given $m$ diverse architectures. The combinatorial optimization problem (i.e., upper-level optimization in \eqref{eq:nes}) is thus intractable to solve within this huge ensemble search space. Recently, \citet{nes} have attempted to avoid these two problems by sampling a small pool of architectures from the search space for their final ensemble search. Thus, they fail to explore the whole search space and may achieve poor ensemble performances in practice. Moreover, their search cost is still unaffordable due to the independent model training of every architecture in the pool.

To this end, we novelly present the \emph{neural ensemble search via Bayesian sampling} (NESBS) algorithm to solve \eqref{eq:nes} effectively and efficiently. We firstly employ the model parameters inherited from a supernet (i.e., a representation of the NAS search space) after its model training to estimate the single-model performances and also the ensemble performance of independently trained architectures (Sec. \ref{sec:oneshot}). This only requires the model training of the supernet and thus allows us to overcome the aforementioned challenge \rom{1}. We then derive a posterior distribution of architectures to characterize both the single-model performances and the diverse model predictions of candidate architectures in the search space (Sec. \ref{sec:posterior}). Finally, based on this posterior distribution and also the aforementioned ensemble performance estimation, we introduce \emph{Monte Carlo Sampling} (MC Sampling) and \emph{Stein Variational Gradient Descent with regularized diversity} (SVGD-RD) to explore the ensembles in the whole~search space~effectively and efficiently (Sec. \ref{sec:sampling}), which thus allows us to overcome the aforementioned challenge \rom{2}. An overview of our NESBS is in Algorithm~\ref{alg:nes}.

\subsection{Model Training of Supernet}\label{sec:oneshot}

Similar to one-shot NAS algorithms \citep{darts, enas}, we represent NAS search space as a supernet. This then allows us to use the model parameters inherited from this trained supernet to estimate not only the single-model performances but also the ensemble performance of independently trained candidate architectures in the search space. However, in order to realize an accurate and fair estimation of these performances, we need to further ensure that every candidate architecture in the search space is trained for a comparable number of steps, namely, the training fairness among candidate architectures \citep{fairnas}.
To achieve this, in every training step of this supernet, we uniformly randomly sample one single candidate architecture from this supernet for model training (see Fig.~\ref{fig:train}). The training fairness of such a training scheme can then be theoretically guaranteed, as demonstrated in Appendix \ref{sec:proofs}. Moreover, we provide empirical results in Appendix \ref{sec:exp-oneshot} to validate the effectiveness of such performance estimations.

\begin{figure}[t]
\centering
\includegraphics[width=\columnwidth]{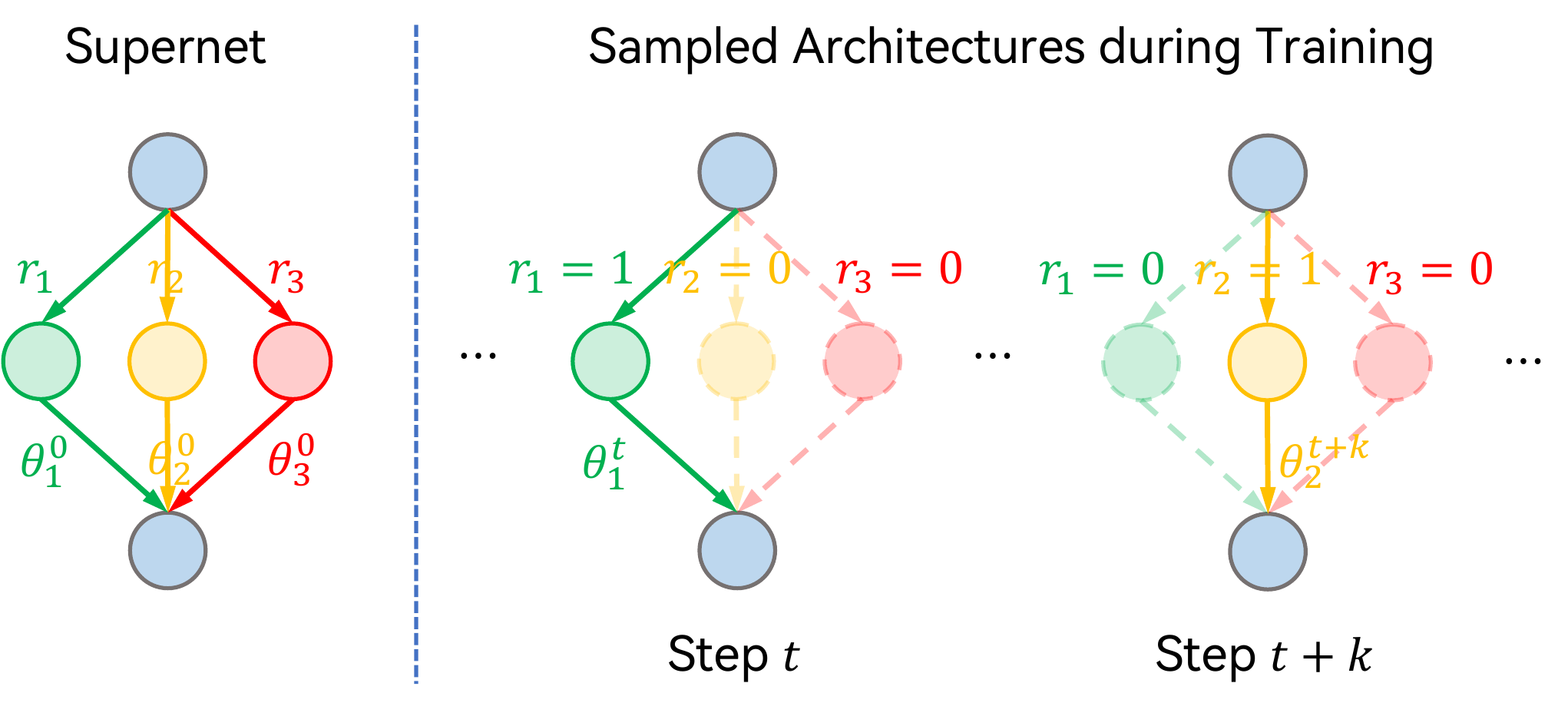}
\caption{An illustration of the model training of~supernet. The supernet here consists of three candidate architectures with $r_i$ indicating the selection of one architecture and $\vtheta_i^{t}$ denoting its model parameters at step $t$. In every training step, only one architecture is uniformly sampled to update its parameters and all other architectures will be ignored.}
\label{fig:train}
\end{figure}

\subsection{Distribution of Architectures}\label{sec:posterior}
It has been demonstrated that both competitive single-model performances and diverse model predictions are required to achieve compelling ensemble performances \citep{zhou-ensemble}. That is, NES algorithms should be capable of selecting architectures with both competitive single-model performances and diverse model predictions to achieve competitive ensemble performances. To realize this, we introduce a posterior distribution of architectures to firstly characterize these two factors. Let $\gD$ denote the validation dataset, and $p(\gA)$ and $p(\gA | \gD)$ denote, respectively, the prior and posterior distributions of a candidate architecture after its model training where $p(\gA)$ follows from a categorical uniform distribution, as required in Sec. \ref{sec:oneshot}. According to the Bayes' theorem, since $p(\gA)$ is uniform and $p(\gD)$ is constant,
\begin{equation}\label{eq:bayes}
\begin{gathered}
    p(\gA|\gD) = p(\gD|\gA)p(\gA)/p(\gD) \propto p(\gD|\gA)
\end{gathered}
\end{equation}
where $p(\gD|\gA)$ (i.e., likelihood) is widely used to represent the single-model performance (i.e., loss) in practice.
So, \eqref{eq:bayes} implies that the posterior distribution $p(\gA|\gD)$ can also characterize the single-model performances of architectures.

Meanwhile, given a $\gamma$-Lipschitz continuous loss function $\Ls(\vf)$, 
the diversity of model predictions (i.e., $\|\vf_{\gA_1}{-}\vf_{\gA_2}\|_2$) can then be lower bounded  based on the Lipschitz continuity of $\Ls(\cdot)$:
\begin{equation}\label{eq:approx}
    \begin{gathered}
        \|\vf_{\gA_1} - \vf_{\gA_2}\|_2 \geq \gamma^{-1}|\Ls(\vf_{\gA_1}) - \Ls(\vf_{\gA_2})| \ .
    \end{gathered}
    \end{equation}
Therefore, \eqref{eq:approx} suggests that in addition to being able to characterize the single-model performances of architectures (i.e., $\Ls(\vf)$), the posterior distribution $p(\gA|\gD)$ can estimate the diversity of model predictions for different architectures (e.g., $\gA_1$ and $\gA_2$) using $|p(\gA_1|\gD) - p(\gA_2|\gD)|$.

However, it is intractable to obtain exact posterior distribution $p(\gA|\gD)$ in the NAS search space. So, we approximate it with a variational distribution $p_{\valpha}(\gA)$ (parameterized by a low-dimensional $\valpha$) that can be optimized via variational inference, i.e., by minimizing the KL divergence between $p_{\valpha}(\gA)$ and $p(\gA|\gD)$. Equivalently, we only need to maximize a lower bound of the log-marginal likelihood (i.e., the \emph{evidence lower bound} (ELBO) \citep{vae}) to get an optimal variational distribution $p_{\valpha^*}(\gA)$:
\begin{equation}\label{eq:elbo}
\begin{gathered}
    \max_{\valpha} \E_{\gA \sim p_{\valpha}(\gA)}\left[\log p(\gD|\gA)\right] - \text{KL}[p_{\valpha}(\gA) || p(\gA)] \ .
\end{gathered}
\end{equation}
Similar to \citep{vae}, a gradient-based optimization algorithm with the reparameterization trick is employed to solve \eqref{eq:elbo} efficiently (see Appendix \ref{sec:setting-posterior}). While \citet{snas} have adopted a similar form to \eqref{eq:elbo} (without the KL term) \emph{during} the model training of the supernet (namely, the \emph{best-response} posterior distribution), our \emph{post-training} posterior distribution is able to not only provide a more accurate characterization of the single-model performances but also contribute to an improved ensemble search performance, as demonstrated in Appendix \ref{sec:exp-best_vs_post}. 

\subsection{Bayesian Sampling}\label{sec:sampling}
To solve \eqref{eq:nes} effectively and efficiently, we finally introduce two novel Bayesian sampling algorithms based on the posterior distribution of architectures in Sec.~\ref{sec:posterior}, i.e., \emph{Monte Carlo sampling} (MC Sampling) and \emph{Stein Variational Gradient Descent with regularized diversity} (SVGD-RD), to sample ensembles with both competitive single-model performances and compelling diversity of model predictions, as required by well-performing ensembles~\citep{zhou-ensemble}.

\begin{figure}[t]
\begin{minipage}{\columnwidth}
\begin{algorithm}[H]
  \caption{NES via Bayesian Sampling (NESBS)}
  \label{alg:nes}
\begin{algorithmic}[1]
  \STATE {\bfseries Input:} Iterations $T$, ensemble size $n$, a supernet
  \STATE Train the supernet to get its tuned parameters $\vtheta^*$
  \STATE Obtain the posterior distribution $p_{\valpha^*}(\gA)$ with \eqref{eq:elbo}
  \FOR{iteration $t=1, \ldots, T$}
  \STATE Sample $S_t$ of size $n$ via Algorithm \ref{alg:mc} or \ref{alg:svgd-rd} 
  \STATE Evaluate estimated $\Ls_{\text{val}}(\gF_{S_t}(\vx, \vTheta_{S_t}^*))$ given $\vtheta^*$
  \ENDFOR
  \STATE Select optimum $S^* = \argmin_{S_t} \Ls_{\text{val}}(\gF_{S_t}(\vx, \vTheta_{S_t}^*))$
\end{algorithmic}
\end{algorithm}
\end{minipage}
\hfill
\begin{minipage}{\columnwidth}
\begin{algorithm}[H]
  \caption{MC Sampling}
  \label{alg:mc}
\begin{algorithmic}[1]
  \STATE {\bfseries Input:} Ensemble size $n$, set $S=\emptyset$, posterior $p_{\valpha^*}(\gA)$
  \FOR{iteration $i=1, \ldots, n$}
  \STATE Sample $\gA_i \sim p_{\valpha^*}(\gA)$
  \STATE $S \leftarrow S \cup \{\gA_i\}$
  \ENDFOR
  \STATE {\bfseries Output:} $S$
\end{algorithmic}
\end{algorithm}
\end{minipage}

\begin{minipage}{\columnwidth}
\begin{algorithm}[H]
  \caption{SVGD-RD}
  \label{alg:svgd-rd}
\begin{algorithmic}[1]
  \STATE {\bfseries Input:} Diversity coefficient $\delta$, ensemble size $n$, iterations $L$, initial particles $\{\vx_i^{(0)}\}_{i=1}^n$, posterior $p_{\valpha^*}(\gA)$, kernel $k(\vx, \vx')$, step size $\{\epsilon_l\}_{l=1}^L$
  \FOR{iteration $l=0, \ldots, L-1$}
  \STATE Evaluate updates $\widehat{\vphi}_l^*(\vx) =\displaystyle\frac{1}{n}\sum_{j=1}^n\nabla_{\vx_j^{(l)}}k(\vx_j^{(l)}, \vx)- \delta \nabla_{\vx}k(\vx_j^{(l)}, \vx) + k(\vx_j^{(l)},\vx)\nabla_{\vx_j^{(l)}}\log p_{\valpha^*}$
  \STATE Update particles $\vx_i^{(l+1)} \leftarrow \vx_i^{(l)} + \epsilon_l\ \widehat{\vphi}_l^*(\vx_i^{(l)})$
  \ENDFOR
  
  \STATE {\bfseries Output:} $S=\{\gA_i\}_{i=1}^n$ derived based on $\{\vx_i^{(L)}\}_{i=1}^n$
\end{algorithmic}
\end{algorithm}
\end{minipage}
\end{figure}

\subsubsection{Monte Carlo Sampling (MC Sampling)}\label{sec:mc}
Given the posterior distribution of architectures in Sec. \ref{sec:posterior}, we firstly propose to use \emph{Monte Carlo sampling} (MC Sampling) to sample a set of architectures from this posterior distribution (Algorithm \ref{alg:mc}). Note that MC Sampling guarantees that architectures with better single-model performances will be sampled (i.e., exploited) with higher probabilities, while architectures with diverse model predictions
can also be sampled (i.e., explored) due to the inherent randomness in the sampling process.
Compared with conventional NAS algorithms that select only one single well-performing architecture from the search space \citep{gdas, snas}, our MC sampling algorithm extends these algorithms by exploring the capability of diverse architectures while preserving its exploitation of architectures with compelling single-model performances.

\begin{figure}[t]
\centering
\includegraphics[width=\columnwidth]{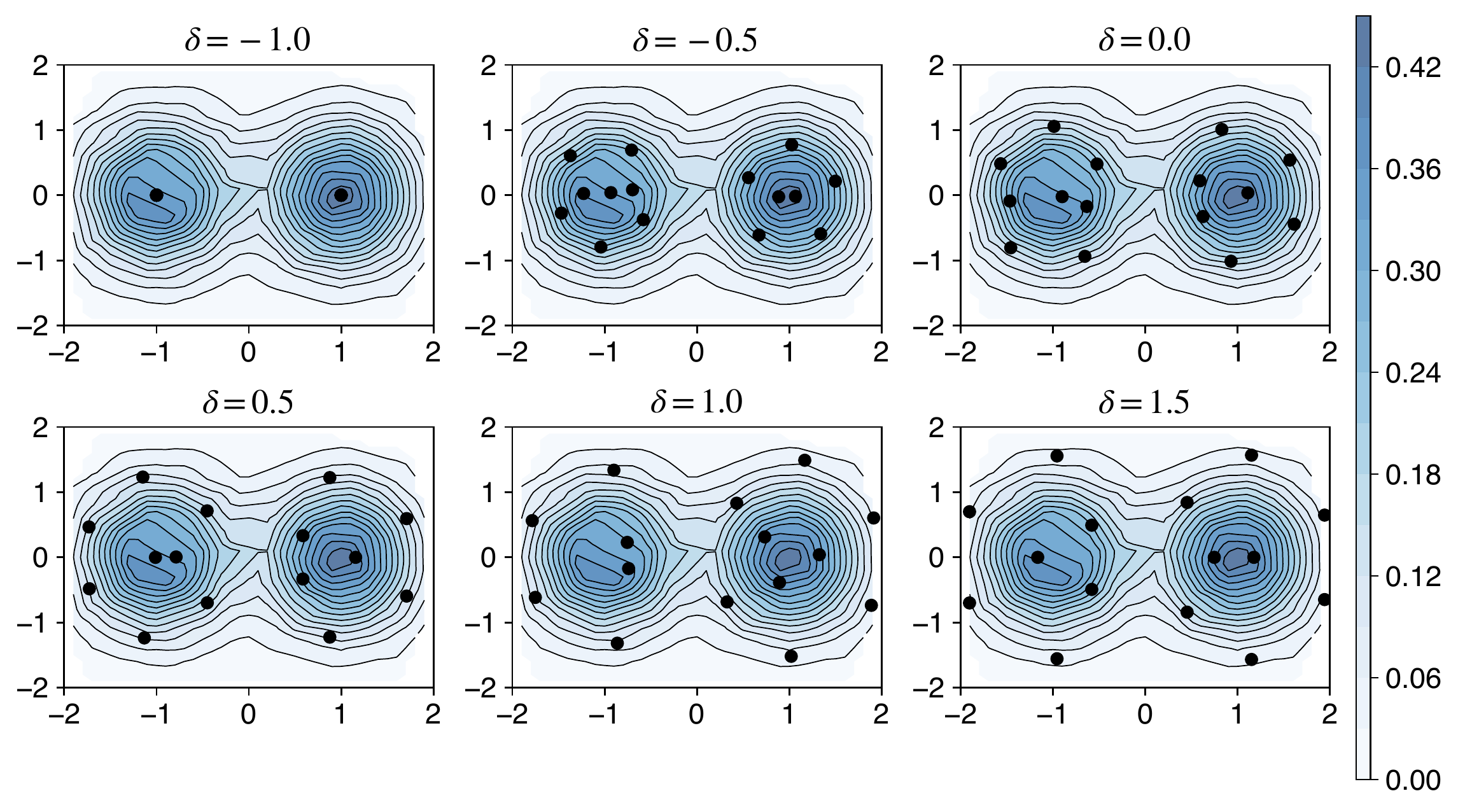}
\caption{Impact of $\delta$ in SVGD-RD. We use contours and dots to denote the density of target distribution and sampled particles, respectively. The target distribution is chosen to be $p(\vx){=}(1/Z)\left[\gN(\vx | \vu_1, \Sigma_1) + \gN(\vx | \vu_2, \Sigma_2)\right]$ where $\vu_1{=}(-1, 0)$, $\vu_2{=}(0, 1)$, $\Sigma_1{=}\Sigma_2{=}\diag((0.25, 0.5))$, and $Z$ denotes the normalization constant. Those sampled particles are obtained from Algorithm \ref{alg:svgd-rd} using $L{=}1000$, $n{=}15$, $\epsilon_l{=}0.1$, and a \emph{radial basis function} (RBF) kernel. Notably, SVGD-RD tends to sample particles with more diverse probability densities as $\delta$ is increased, hence indicating a controllable (via $\delta$) diversity in our SVGD-RD algorithm. Meanwhile, SVGD-RD can consistently sample particles with high probability densities under varying $\delta$.}
\label{fig:diversity}
\end{figure}

\subsubsection{SVGD with Regularized Diversity (SVGD-RD)}\label{sec:svgd-rd}
However, the diversity of sampled architectures using the MC Sampling algorithm above cannot be controlled and hence may lead to poor ensemble search results.
So, in order to achieve a controllable diversity, we resort to \emph{Stein Variational Gradient Descent} (SVGD). Theoretically, SVGD is capable of sampling particles with both large probability density and good diversity where the diversity is explicitly encouraged (i.e., by the second term in \eqref{eq:svgd-approx}). Nonetheless, in practice, the particles sampled by SVGD may still fail to represent the target distribution well owing to the lack of diversity among those sampled particles, as observed in \citep{mp-svgd}. Besides, the diversity of sampled particles in standard SVGD  still cannot be controlled by human experts. 

We hence develop an \emph{SVGD with regularized diversity} (SVGD-RD) sampling algorithm that can achieve a controllable diversity among those sampled particles. We follow the notations from Sec. \ref{sec:bg-svgd}.
In particular, when optimizing the distribution $q^*$ (represented by the $n$ particles $\{\vx^*_i\}_{i=1}^{n}$), we modify the objective in \eqref{eq:min-kl} by adding a term representing the (controllable) diversity among the particles measured by the kernel function $k(\vx, \vx')$:
\begin{equation}\label{eq:min-kl-max-div}
    q^* = \argmin_{q \in \gQ}\text{KL}(q||p) + n\delta\  \E_{\vx,\vx' \sim q} \left[k(\vx, \vx')\right] 
\end{equation}
where $\delta$ is the parameter explicitly controlling the diversity, and $p$ in \eqref{eq:min-kl-max-div} denotes the posterior distribution $p_{\valpha^*}(\gA)$ derived in Sec. \ref{sec:posterior} which we intend to sample from.
Following the work of SVGD, $q^*$ in \eqref{eq:min-kl-max-div} is represented by $\{\vx^*_i\}_{i=1}^{n}$ denoting our final selected neural network ensemble that can achieve both competitive single-model performances (i.e., large probability density) and also diverse model predictions. Proposition \ref{prop:svgdrd-update} below provides one possible update rule for the particles $\{\vx_i\}_{i=1}^{n}$ to optimize \eqref{eq:min-kl-max-div} (see its proof in Appendix \ref{sec:proofs}).
Finally, Algorithm \ref{alg:svgd-rd} summarizes the details of our SVGD-RD algorithm and Appendix \ref{sec:setting-svgd} provides its optimization details in practice. After obtaining those optimal particles $\{\vx^*_i\}_{i=1}^{n}$ in our SVGD-RD algorithm, we then apply these particles to derive the architectures in our final selected ensembles (see details in Appendix \ref{sec:setting-svgd}).

\begin{proposition}\label{prop:svgdrd-update}
    Given the proximal operator $\normalfont \text{prox}_h(\vy)=\argmin_{\vz}h(\vz)+1/2\|\vz-\vy\|_2^2$, by applying proximal gradient method \citep{proximal} and proper approximation, \eqref{eq:min-kl-max-div} can be optimized via the following updates of the particles $\{\vx_i\}_{i=1}^{n}$:
    \begin{equation*}
    \begin{array}{l}
        \displaystyle\vx_i \leftarrow \vx_i + \frac{1}{n} \sum_{j=1}^n k(\vx_j, \vx_i)\nabla_{\vx_j}\log p(\vx_j) \\
        \displaystyle\qquad\qquad\qquad\quad\ + \nabla_{\vx_j} k (\vx_j, \vx_i) - \delta\nabla_{\vx_i} k (\vx_j, \vx_i)\ .
    \end{array}
    \end{equation*}
\end{proposition}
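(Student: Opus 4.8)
The plan is to read \eqref{eq:min-kl-max-div} as a composite minimization $F(q) = f(q) + h(q)$, where $f(q) \triangleq \text{KL}(q\|p)$ is the smooth part inherited from standard SVGD and $h(q) \triangleq n\delta\,\E_{\vx,\vx'\sim q}[k(\vx,\vx')]$ is the diversity regularizer to which the proximal operator is applied. The proximal gradient method \citep{proximal} then alternates one (functional) gradient-descent step on $f$ and one proximal step on $h$, and the claim is that concatenating these two steps and collapsing them into a single particle update produces exactly the stated rule. I would represent $q$ throughout by its $n$ particles $\{\vx_i\}_{i=1}^n$ and track how each step acts on them.

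For the gradient step on $f$, I would invoke the SVGD result recalled in Sec.~\ref{sec:bg-svgd} verbatim. Minimizing $\text{KL}(q\|p)$ by functional gradient descent in the RKHS $\gH$ moves the particles along $\widehat{\vphi}^*$, so the intermediate iterate is $\vy_i = \vx_i + \epsilon\,\widehat{\vphi}^*(\vx_i)$ with $\widehat{\vphi}^*(\vx_i) = \tfrac{1}{n}\sum_j [k(\vx_j,\vx_i)\nabla_{\vx_j}\log p(\vx_j) + \nabla_{\vx_j}k(\vx_j,\vx_i)]$. This contributes the first two terms of the target update and requires no new argument beyond citing \eqref{eq:svgd-approx}.

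The substance is the proximal step $\vx_i = \text{prox}_{\epsilon h}(\vy)_i = \argmin_{\{\vz_i\}} \epsilon h(\{\vz_i\}) + \tfrac{1}{2}\sum_i\|\vz_i-\vy_i\|_2^2$. Here I would first approximate the expectation in $h$ by its empirical mean over the particles, $\E_{\vx,\vx'\sim q}[k(\vx,\vx')] \approx n^{-2}\sum_{i,j}k(\vx_i,\vx_j)$, so that $h \approx \tfrac{\delta}{n}\sum_{i,j}k(\vx_i,\vx_j)$. Setting the gradient of the proximal objective to zero gives the stationarity condition $\vz_i - \vy_i + \tfrac{2\epsilon\delta}{n}\sum_j\nabla_{\vz_i}k(\vz_i,\vz_j) = 0$, where the factor $2$ comes from the symmetry of $k$, which makes each particle appear in both kernel slots. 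Since this is implicit in $\vz_i$, the \emph{proper approximation} is to evaluate the kernel gradient at the incoming iterate, $\vz_i \approx \vy_i$, and to fold the factor $2$ and the step size into the free coefficient $\delta$; using symmetry $\nabla_{\vz_i}k(\vz_i,\vz_j) = \nabla_{\vz_i}k(\vz_j,\vz_i)$, this yields the correction $-\tfrac{\delta}{n}\sum_j\nabla_{\vx_i}k(\vx_j,\vx_i)$.

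Finally I would concatenate the two steps, i.e., substitute $\vy_i$ into the proximal correction and (again to first order in the step size, with $\epsilon$ set to $1$ here or applied externally as $\epsilon_l$ in Algorithm~\ref{alg:svgd-rd}) add the $f$- and $h$-contributions, recovering the displayed update with its three terms. I expect the proximal step to be the only real obstacle: the prox is defined implicitly because the minimizer enters inside $\nabla k$, so the argument hinges on justifying the explicit linearization $\vz_i\approx\vy_i$ together with the absorption of the constant $2$ and the step size into $\delta$, and on the empirical-mean approximation of the diversity expectation. Once those approximations are accepted, the $f$ part is immediate from the cited SVGD theory and the remaining algebra is routine.
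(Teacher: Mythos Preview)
Your proposal follows essentially the same route as the paper: split \eqref{eq:min-kl-max-div} into the KL term handled by the SVGD functional-gradient step and the diversity term handled by a linearized proximal step, then concatenate. The only notable bookkeeping difference is that the paper avoids your factor of $2$ by declaring the other particles $\vx_j$ frozen when defining the per-particle function $h(\vx)\triangleq(\delta/n)\sum_j k(\vx,\vx_j)$, and then linearizes the implicit prox via a first-order Taylor expansion of $h$ about $\vx_i$ (rather than $\vy_i$); both devices are heuristic approximations of the same flavor as the ones you invoke.
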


Compared with MC Sampling, our SVGD-RD algorithm provides a controllable trade-off between the single-model performances and the diverse model predictions.
On the one hand, the minimization of the KL divergence term in \eqref{eq:min-kl-max-div} encourages the selection of architectures with competitive single-model performances by favoring particles with high probability densities, as shown by Proposition \ref{prop:exploitation} below (its proof is in Appendix \ref{sec:proofs}).\footnote{Although Proposition \ref{prop:exploitation} is only applicable in the case of $n=1$, our SVGD-RD is still capable of sampling particles with high probability densities when $n>1$, as validated in Fig.~\ref{fig:diversity}.}
On the other hand, the maximization of the scaled distance $-n\delta\ \E_{\vx,\vx' \sim q} \left[k(\vx, \vx')\right]$ among the sampled particles leads to a controllable diversity (via $\delta$) among these sampled particles and also a controllable diversity of the probability densities among these particles (see Fig.~\ref{fig:diversity}), which also implies a controllable diversity of the model predictions, as suggested in Sec.~\ref{sec:posterior}.

\begin{proposition}\label{prop:exploitation}
Let $p$ be a target density and $k(\vx,\vx')=c$ for every $\vx=\vx'$ where $c$ is a constant. For any $\delta \in \mathbb{R}$, our SVGD-RD algorithm is equivalent to the maximization of the density $p$ w.r.t.~$\vx$ in the case of $n=1$.
\end{proposition}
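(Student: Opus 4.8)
The plan is to specialize the single-step update established in Proposition~\ref{prop:svgdrd-update} to the case $n=1$ and show that the two kernel-gradient contributions — in particular the $\delta$-weighted diversity term — vanish identically, leaving a pure gradient-ascent step on $\log p$.

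First I would set $n=1$, so that there is a single particle $\vx$ and the sum over $j$ collapses to the lone index $j=i=1$. Writing $\nabla_1$ and $\nabla_2$ for the gradients of $k$ in its first and second arguments, the update of Proposition~\ref{prop:svgdrd-update} becomes
\begin{equation*}
    \vx \leftarrow \vx + k(\vx, \vx)\nabla_{\vx}\log p(\vx) + \nabla_1 k(\vx, \vx) - \delta\, \nabla_2 k(\vx, \vx)\ .
\end{equation*}
By the hypothesis $k(\vx,\vx)=c$, the first term simplifies to $c\,\nabla_{\vx}\log p(\vx)$.

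The key step — and the only one requiring care — is to show $\nabla_1 k(\vx,\vx)=\nabla_2 k(\vx,\vx)=\vzero$. I would obtain this by differentiating the constant map $g(\vx)\triangleq k(\vx,\vx)=c$: the chain rule gives $\nabla_{\vx} g(\vx) = \nabla_1 k(\vx,\vx) + \nabla_2 k(\vx,\vx) = \vzero$. Since a positive definite kernel is symmetric, $k(\va,\vb)=k(\vb,\va)$ forces $\nabla_1 k(\vx,\vx) = \nabla_2 k(\vx,\vx)$ on the diagonal; together with the previous identity this yields that both diagonal gradients equal $\vzero$. Hence both the second and the $\delta$-weighted third term drop out, \emph{independently of} $\delta$.

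What remains is $\vx \leftarrow \vx + c\,\nabla_{\vx}\log p(\vx)$, i.e.\ gradient ascent on $\log p$ with fixed step size $c$; as $\log$ is strictly increasing this maximizes the density $p$ with respect to $\vx$, establishing the claim for every $\delta\in\mathbb{R}$. I expect the only real obstacle to be the diagonal-gradient argument: one must justify that a kernel constant along the diagonal has vanishing first- and second-argument gradients there, which is exactly where symmetry of the kernel is essential — without it only the sum $\nabla_1 k + \nabla_2 k$ would be forced to zero and the diversity term would survive unless $\delta=-1$.
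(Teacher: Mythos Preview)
Your argument is correct, but it proceeds differently from the paper. You work at the level of the concrete update rule of Proposition~\ref{prop:svgdrd-update}: specializing to $n=1$, you use the constant-diagonal hypothesis together with kernel symmetry to kill both $\nabla_1 k(\vx,\vx)$ and $\nabla_2 k(\vx,\vx)$, leaving a pure gradient-ascent step on $\log p$. The paper instead works at the level of the objective \eqref{eq:min-kl-max-div}: with $n=1$ the diversity term $n\delta\,\E_{\vx,\vx'\sim q}[k(\vx,\vx')]$ is the constant $\delta c$ and drops out, so SVGD-RD collapses to standard SVGD; representing $q$ as a point mass at the single particle then gives $\text{KL}(q\|p)=-\log p(\vx')$, whose minimization is maximization of $p$. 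Your route is more mechanical but has the virtue of tying the claim directly to the algorithm actually run (Algorithm~\ref{alg:svgd-rd}), and it makes explicit why symmetry of $k$ is needed; the paper's route is shorter and bypasses Proposition~\ref{prop:svgdrd-update} entirely, though its KL computation with a point mass is somewhat informal.
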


\section{Experiments}\label{sec:exps}
\subsection{Search in NAS-Bench-201}\label{sec:nasbench}

\begin{table*}[t]
\caption{Comparison of architectures selected by different NAS and ensemble (search) algorithms in NAS-Bench-201 with ensemble size $n=3$. Test errors are reported with the mean and standard error of three independent trials and our search costs are evaluated on a single Nvidia $1080$Ti GPU.  Results marked by $\dagger$ are reported by \citet{nasbench201}.}
\centering
\resizebox{0.87\textwidth}{!}{
\begin{threeparttable}
\begin{tabular}{lcccc}
\toprule
\multirow{2}{*}{\textbf{Architecture(s)}} & \multicolumn{3}{c}{\textbf{Test Error} (\%)} &
\multirow{2}{2cm}{\textbf{ Search Cost} (GPU Hours)} \\
\cmidrule(l){2-4}
& \textbf{CIFAR-10} & \textbf{CIFAR-100} & \textbf{ImageNet-16-200} &  \\
\midrule
& \multicolumn{4}{c}{\textbf{Manual design}} \\
ResNet$^{\dagger}$ \citep{resnet} & 6.03 & 29.14 & 56.37 & - \\
\midrule
& \multicolumn{4}{c}{\textbf{NAS algorithms}} \\
ENAS$^{\dagger}$ \citep{enas} & 45.70$\pm$0.00 & 84.39$\pm$0.00 & 83.68$\pm$0.00 & 3.7 \\
DARTS$^{\dagger}$ (2nd) \citep{darts} & 45.70$\pm$0.00 & 84.39$\pm$0.00 & 83.68$\pm$0.00 & 8.3 \\
GDAS$^{\dagger}$ \citep{gdas} & 6.49$\pm$0.13 & 29.39$\pm$0.26 & 58.16$\pm$0.90 & 8.0 \\
SETN$^{\dagger}$ \citep{setn} & 13.81$\pm$4.63 & 43.13$\pm$7.77 & 68.10 $\pm$4.07 & 8.6 \\
RSPS$^{\dagger}$ \citep{rsps} & 12.34$\pm$1.69 & 41.67$\pm$4.34 & 68.86$\pm$3.88 & 2.1 \\
\midrule
& \multicolumn{4}{c}{\textbf{Ensemble (search) algorithms}} \\
DeepEns \citep{deepens} & 5.75 & 25.27 & 54.70 & - \\
NES-RS \citep{nes} & 5.83$\pm$0.33 & 25.58$\pm$0.84 & 54.34$\pm$1.67 & 5.1 \\
\midrule
& \multicolumn{4}{c}{\textbf{Our ensemble search algorithm}} \\
NESBS (MC Sampling) & 5.76$\pm$0.25 & 25.39$\pm$0.69 & \textbf{53.47}$\pm$1.75 & \textbf{1.1} \\
NESBS (SVGD-RD) & 5.92$\pm$0.07 & \textbf{25.00}$\pm$0.17 & \textbf{52.68}$\pm$0.35 & \textbf{1.2} \\
\bottomrule
\end{tabular}
\end{threeparttable}
}
\label{tab:nasbench201}
\end{table*}

To verify the effectiveness and efficiency of our NESBS algorithm, we firstly compare it with other well-known NAS and ensemble (search) algorithms in NAS-Bench-201 \citep{nasbench201}. Table~\ref{tab:nasbench201} summarizes the results. Table~\ref{tab:nasbench201} shows that ensemble (search) algorithms, including our NESBS, consistently achieve improved generalization performance over conventional NAS algorithms. This is because ensemble (search) algorithms will select neural network ensembles whereas NAS algorithms will select only one single architecture. Moreover, it has been widely verified that model ensembles generally outperform a single machine learning model in practice \citep{zhou-ensemble}. In addition, our NESBS algorithm outperforms other ensemble (search) baseline (i.e., DeepEns and NES-RS), especially on large-scale datasets (i.e., CIFAR-100 \citep{cifar} and ImageNet-16-200 \citep{imagenet-16-120}) while incurring less search costs than NES-RS, which thus implies the superior performance of our NESBS over these ensemble (search) baselines. Even on a small-scale dataset (i.e., CIFAR-10), our NESBS can also achieve comparable search results to DeepEns and NES-RS.  Interestingly, our NESBS algorithm is even able to incur reduced search costs than conventional NAS algorithms. This is likely because more training epochs have been used in these NAS algorithms, whereas a small number of training epochs can already contribute to well-performing results for our NESBS algorithm.

\subsection{Search in The DARTS Search Space}\label{sec:darts}
We further demonstrate the superior search effectiveness and efficiency of our NESBS by comparing it with other NAS and ensemble (search) baselines in a larger search space (i.e.,  DARTS \citep{darts} search space) using both classification and adversarial defense tasks on CIFAR-10/100 or ImageNet~\citep{imagenet}. We follow Appendix \ref{sec:setting-training} to evaluate the final neural network ensembles selected by our NESBS algorithm with ensemble size $n=3$, $T=5$, and optimization details in Appendix~\ref{sec:setting-exp}.

\begin{table*}[t]
\caption{Comparison of different image classifiers on CIFAR-10/100. Results of MC DropPath are from a drop~probability of $0.01$ and our search costs are evaluated on Nvidia $1080$Ti.}
\centering
\resizebox{0.87\textwidth}{!}{
\begin{threeparttable}
\begin{tabular}{lcccccc}
\toprule
\multirow{2}{*}{\textbf{Architecture(s)}} & \multicolumn{2}{c}{\textbf{Test Error} (\%)} &
\multicolumn{2}{c}{\textbf{Params} (M)} &
% \multirow{2}{*}{\textbf{Search Cost (GPU days)}} &
\multirow{2}{1.8cm}{\textbf{Search Cost} (GPU Days)} &
\multirow{2}{*}{\textbf{Search Method}} \\
\cmidrule(l){2-3} \cmidrule(l){4-5} 
& \textbf{C10} & \textbf{C100} & \textbf{C10} & \textbf{C100} & \\
\midrule 
& \multicolumn{6}{c}{\textbf{NAS algorithms}} \\

NASNet-A \citep{nasnet} & 2.65 & - & 3.3 & - & 2000 & RL\\
AmoebaNet-A \citep{amoebanet} & 3.34 & 18.93 & 3.2 & 3.1 & 3150 & evolution\\
PNAS \citep{pnas} & 3.41 & 19.53 & 3.2 & 3.2 & 225 & SMBO\\
ENAS \citep{enas} & 2.89 & 19.43 & 4.6 & 4.6 & 0.5 & RL\\
DARTS \citep{darts} & 2.76 & 17.54 & 3.3 & 3.4 & 1 & gradient\\
GDAS \citep{gdas} & 2.93 & 18.38 & 3.4 & 3.4 & 0.3 & gradient \\

P-DARTS \citep{p-darts} & 2.50 & - & 3.4 & - & 0.3 & gradient \\
DARTS- (avg) \citep{darts-} & 2.59 & 17.51 & 3.5 & 3.3 & 0.4 & gradient \\
SDARTS-ADV \citep{sdarts} & 2.61 & - & 3.3 & - & 1.3 & gradient \\
\midrule
& \multicolumn{6}{c}{\textbf{Ensemble (search) algorithms}} \\
MC DropPath (ENAS) & 2.88 & 16.83 & 3.8$^\ddagger$ & 3.9$^\ddagger$ & - & - \\

DeepEns (ENAS) & 2.49 & 15.04 & 3.8$^\ddagger$ & 3.9$^\ddagger$ & - & - \\
DeepEns (DARTS) & 2.42 & 14.56 & 3.3$^\ddagger$ & 3.4$^\ddagger$ & - & - \\

NES-RS$^{\sharp}$ \citep{nes} & 2.50 & 15.24 & 3.0$^\ddagger$ & 3.1$^\ddagger$ & 0.7 & greedy \\

\midrule
& \multicolumn{6}{c}{\textbf{Our ensemble search algorithm}} \\
NESBS (MC Sampling) & \textbf{2.41} & 14.70 & 3.8$^\ddagger$ & 3.9$^\ddagger$ & \textbf{0.2} & sampling \\
NESBS (SVGD-RD) & \textbf{2.36} & \textbf{14.55} & 3.7$^\ddagger$ & 3.8$^\ddagger$ & \textbf{0.2} & sampling \\
\bottomrule
\end{tabular}
\begin{tablenotes}\footnotesize
    \item[$\ddagger$] Reported as the averaged parameter size of the architectures in a neural network ensemble.
    \item[$\sharp$] Obtained from a pool of size $50$, in which every architecture is uniformly randomly sampled from the DARTS search spaces and then trained independently for $50$ epochs following the evaluation settings in Appendix~\ref{sec:setting-training}.
\end{tablenotes}
\end{threeparttable}
}
\label{tab:accuracy-cifar}
\end{table*}

\begin{table}[!t]
\caption{Comparison of image classifiers on ImageNet. The ensemble size is set to $n=3$ for NES-RS and NESBS.}
\centering
\resizebox{\columnwidth}{!}{
\begin{threeparttable}
\begin{tabular}{lcccc}
\toprule
\multirow{2}{*}{\textbf{Architecture(s)}} & \multicolumn{2}{c}{\textbf{Test Error} (\%)} &
\multirow{2}{1.0cm}{\textbf{Params}} &
\multirow{2}{0.6cm}{\textbf{$+\times$}} \\
\cmidrule(l){2-3}
& \textbf{Top-1} & \textbf{Top-5} & (M) & (M) \\

\midrule
\multicolumn{5}{c}{\textbf{NAS algorithms}} \\
NASNet-A  & 26.0 & 8.4 & 5.3 & 564 \\
AmoebaNet-A & 25.5 & 8.0 & 5.1 & 555 \\
PNAS & 25.8 & 8.1 & 5.1 & 588 \\
DARTS & 26.7 & 8.7 & 4.7 & 574 \\
GDAS & 26.0 & 8.5 & 5.3 & 581 \\
P-DARTS & 24.4 & 7.4 & 4.9 & 557 \\
SDARTS-ADV & 25.2 & 7.8 & 5.4 & 594 \\
\midrule
\multicolumn{5}{c}{\textbf{Ensemble (search) algorithm}} \\
NES-RS & 23.4 & 6.8 & 3.9 & 432 \\
\midrule
\multicolumn{5}{c}{\textbf{Our ensemble search algorithm}} \\
NESBS (MC Sampling) & \textbf{22.3} & \textbf{6.2} & 4.6 & 522 \\
NESBS (SVGD-RD) & \textbf{22.3} & \textbf{6.1} & 4.9 & 562 \\
\bottomrule
\end{tabular}
\end{threeparttable}
}
\label{tab:accuracy-imagenet}
\end{table}

\paragraph{Ensemble for classification.}
Table \ref{tab:accuracy-cifar} summarizes the comparison of classification performances on CIFAR-10/100. Similar to the results in Sec. \ref{sec:nasbench}, ensemble (search) algorithms, including our NESBS, are generally able to achieve improved generalization performances over conventional NAS algorithms, which thus justifies the essence of ensemble (search) algorithms for improved performance.
Notably, even compared with other ensemble baselines such as MC DropPath (i.e., developed following Monte Carlo Dropout \citep{mc-dropout}) and DeepEns, our NESBS is still able to achieve improved performances. Since these ensemble baselines are orthogonal to our NESBS, they can be integrated into our NESBS for further performance improvement in real-world applications. More importantly, our algorithm outperforms NES-RS by achieving both improved search effectiveness (lowest test errors) and efficiency (lowest search costs). Furthermore, our NESBS even incurs comparable search costs compared with the most efficient NAS algorithms (e.g., GDAS, P-DARTS), which also highlights the efficiency of our NESBS. Similar results on ImageNet can be achieved by our NESBS as shown in Table~\ref{tab:accuracy-imagenet}. \footnote{Following the convention of NAS and ensemble search algorithms in Table~\ref{tab:accuracy-imagenet}, the ensembles selected by our NESBS are also searched on CIFAR-10 and then transferred to ImageNet.}

\paragraph{Ensemble for adversarial defense.}\label{sec:exp-adversarial}
Ensemble methods have already been shown to be an essential and effective defense mechanism against adversarial attacks~\citep{ensemble-for-defense}.
Specifically, an adversarial attacker can only use \emph{a single model} randomly sampled from an ensemble to generate the adversarial examples, whereas the ensemble method defends against adversarial attacks (i.e., makes its predictions) using \emph{all models} in this ensemble. Ensemble methods can defend against the adversarial attacks in such a setting because the generated adversarial examples using only one single model are unlikely to fool all models in an ensemble.
More details are provided in Appendix \ref{sec:setting-adversarial}.
Table \ref{tab:adversarial} summarizes the comparison of adversarial defense among ensemble (search) algorithms on CIFAR-10/100 under different
white-box adversarial attacks, including the \emph{Fast Gradient Signed Method} (FGSM) attack \cite{fgsm}, the \emph{Projected Gradient Descent} (PGD) attack \cite{pgd}, the \emph{Carlini Wagner} (CW) attack \cite{cw}, and the AutoAttack~\citep{autoattack}.
Table \ref{tab:adversarial} shows that ensemble (search) algorithms are indeed able to significantly improve the performance of adversarial defense, i.e., the test accuracies in the \emph{Defense} column are consistently higher than the ones in \emph{Attack} column. More importantly, even under different white-box adversarial attacks, our NESBS algorithm can generally achieve improved defense performances (i.e., higher test accuracy in the \emph{Defense} columns) than other baselines including DeepEns and NES-RS. These results thus further support the effectiveness of our NESBS over existing ensemble (search) algorithms.
Besides, even regarding the adversarial robustness of the single models in an ensemble, the architectures selected by our NESBS are also more advanced (i.e., by achieving higher test accuracy in the \emph{Attack} columns) than well-known architectures such as RobNet~\citep{robnet} and DARTS.

\begin{table*}[t]
\caption{Comparison of adversarial defense among different ensemble (search) algorithms on CIFAR-10/100 under white-box adversarial attacks.
The \emph{Attack} and \emph{Defense} columns denote the test \emph{accuracy} under the attack using a single model randomly sampled from an ensemble and the defense using the whole ensemble, respectively.
Each result reports the mean and standard deviation of test accuracies for $3$ rounds of the attack-defense process with an ensemble size of $n=3$.}
\newcommand{\ms}{\phantom{-}}
\renewcommand\multirowsetup{\centering}
\centering
\resizebox{\textwidth}{!}{
\begin{tabular}{l*{8}{c}}
\toprule
\multirow{2}{*}{\textbf{Method}} & 
\multicolumn{2}{c}{\textbf{FGSM}} &  
\multicolumn{2}{c}{\textbf{PGD-40}} &
\multicolumn{2}{c}{\textbf{CW}} &
\multicolumn{2}{c}{\textbf{AutoAttack}}
\\
\cmidrule(l){2-3} \cmidrule(l){4-5} \cmidrule(l){6-7} \cmidrule(l){8-9} 
&
Attack (\%) & Defense (\%) &
Attack (\%) & Defense (\%) & 
Attack (\%) & Defense (\%) & 
Attack (\%) & Defense (\%)  \\
\midrule 
& \multicolumn{8}{c}{\textbf{On CIFAR-10 Dataset}} \\
DeepEns & - & - & - & - & - & - & - & - \\
$\quad \hookrightarrow$ RobNet-free & 66.62$\pm$0.32 & 85.25$\pm$0.39 & 41.81$\pm$0.80 & 77.48$\pm$0.67 & $\;\;$5.74$\pm$1.41 & 86.53$\pm$0.50 & 21.35$\pm$0.33 & 45.51$\pm$0.15 \\
$\quad \hookrightarrow$ ENAS & 77.85$\pm$0.58 & 87.94$\pm$0.21 & 59.51$\pm$1.13 & 86.57$\pm$0.15 & 31.36$\pm$1.20 & 85.20$\pm$0.77 & 31.71$\pm$0.72 & 50.96$\pm$0.07 \\
$\quad \hookrightarrow$ DARTS & 76.79$\pm$0.80 & 88.21$\pm$0.14 & 57.71$\pm$1.65 & 82.02$\pm$0.10 & 26.90$\pm$1.37 & 82.46$\pm$0.35 & 29.97$\pm$1.17 & 49.67$\pm$0.14 \\
NES-RS & 79.19$\pm$1.39 & 89.32$\pm0.27$ & 65.59$\pm$2.11 & 85.22$\pm$0.41 & 37.20$\pm$4.62 & 86.75$\pm$0.88 & 35.00$\pm$1.15 & 53.80$\pm$0.14 \\
\cmidrule(l){1-9}
NESBS (MC Sampling) & 78.75$\pm$1.29 & 89.15$\pm$0.08 & 63.60$\pm$1.87 & 85.35$\pm$0.31 & \textbf{37.71}$\pm$1.97 & \textbf{86.86}$\pm$0.66 & \textbf{36.02}$\pm$0.64 & \textbf{56.90}$\pm$0.17 \\
NESBS (SVGD-RD) & 79.12$\pm$0.61 & \textbf{89.86}$\pm$0.33 & 65.53$\pm$1.56 & 85.37$\pm$0.38 & \textbf{38.27}$\pm$1.27 & 86.00$\pm$1.10 & \textbf{37.55}$\pm$0.68 & \textbf{57.15}$\pm$0.20\\
\midrule
& \multicolumn{8}{c}{\textbf{On CIFAR-100 Dataset}} \\
DeepEns & - & - & - & - & - & - & - & - \\
$\quad \hookrightarrow$ RobNet-free & 36.47$\pm$0.25 & 61.39$\pm$0.30 & 18.18$\pm$0.47 & 52.61$\pm$0.13 & 2.36$\pm$0.13 & 69.44$\pm$0.04 & $\;\;$7.31$\pm$0.35 & 24.56$\pm$0.33 \\
$\quad \hookrightarrow$ ENAS & 46.40$\pm$0.37 & 64.94$\pm$0.27 & 28.87$\pm$0.27 & 56.79$\pm$0.25 & 9.60$\pm$0.30 & 69.43$\pm$0.44 & 11.53$\pm$0.47 & 27.01$\pm$0.27 \\
$\quad \hookrightarrow$ DARTS & 46.98$\pm$0.57 & 65.38$\pm$0.23 & 28.78$\pm$0.74 & 57.10$\pm$0.04 & 9.73$\pm$0.43 & 70.15$\pm$0.29 & 11.20$\pm$0.40 & 26.86$\pm$0.36 \\
NES-RS & 47.10$\pm$1.46 & 65.33$\pm0.36$ & 30.68$\pm$1.66 & 58.80$\pm$0.80 & 9.96$\pm$1.45 & 70.24$\pm$0.33 & 12.01$\pm$0.93 & 27.49$\pm$0.34 \\
\cmidrule(l){1-9}
NESBS (MC Sampling) & \textbf{50.69}$\pm$1.58 & \textbf{67.63}$\pm$0.05 & \textbf{33.37}$\pm$0.42 & \textbf{60.36}$\pm$0.62 & \textbf{15.64}$\pm$2.83 & \textbf{71.25}$\pm$1.27 & \textbf{13.11}$\pm$1.16 & \textbf{29.87}$\pm$1.17 \\
NESBS (SVGD-RD) & \textbf{51.47}$\pm$0.40 & \textbf{66.66}$\pm$0.13 & \textbf{35.02}$\pm$0.37 & \textbf{59.96}$\pm$0.18 & \textbf{16.72}$\pm$0.61 & \textbf{69.88}$\pm$0.16 & \textbf{14.62}$\pm$0.55 & \textbf{31.07}$\pm$0.33 \\
\bottomrule
\end{tabular}
}
\label{tab:adversarial}
\end{table*}

\subsection{Single-model Performances and Diverse Model Predictions}\label{sec:trade-off}
We demonstrate that the effectiveness of our NESBS results from its ability to achieve a good trade-off between the single-model performances and the diversity of model predictions. 
We firstly quantitatively compare the single-model performances (measured by the \emph{averaged test error} (ATE) of the models in an ensemble) and the diversity of model predictions (measured by the \emph{pairwise predictive disagreement} (PPD) of an ensemble \citep{disagreement}) achieved by different ensemble (search) algorithms on CIFAR-10/100. 
We further qualitatively visualize their single-model performances and diverse model predictions using a histogram of the ATE of the models in their ensembles and a t-SNE \citep{t-sne} plot of their model predictions, respectively.

\begin{table}[t]
\caption{Quantitative comparison of the single-model performances (measured by ATE (\%), smaller is better) and the diversity of model predictions (measured by PPD (\%), larger is better) achieved by different ensemble (search) algorithms with an ensemble size of $3$ on CIFAR-10/100.}
\renewcommand\multirowsetup{\centering}
\centering
\resizebox{\columnwidth}{!}{
\begin{tabular}{l *{4}{c}}
\toprule
\multirow{2}{*}{\textbf{Method}} & 
\multicolumn{2}{c}{\textbf{C10}} &
\multicolumn{2}{c}{\textbf{C100}} \\
\cmidrule(l){2-3} \cmidrule(l){4-5}
& ATE & PPD & ATE & PPD \\
\midrule 
MC DropPath (DARTS) & 2.71 & 0.39 & 16.68 & 2.63 \\
DeepEns (DARTS) & \textbf{2.69} & 2.08 & \textbf{16.18} & 12.45  \\
NES-RS  & 2.87 & 2.29 & 17.20 & \textbf{14.14} \\
\midrule
NESBS (MC Sampling)  & 2.80 & \textbf{2.57}  & 16.70 & 13.84\\
NESBS (SVGD-RD) & 2.78 & 2.27 & 16.50 & 13.16 \\
\bottomrule
\end{tabular}
}
\label{tab:perf_vs_diver}
\end{table}

Table \ref{tab:perf_vs_diver} and Fig.~\ref{fig:perf_vs_diver} present the results of our quantitative and qualitative comparisons, respectively. Compared with the ensemble baselines of MC DropPath and DeepEns, our NESBS is capable of enjoying a larger diversity of model predictions while preserving competitive single-model performances. Meanwhile, compared with the ensemble search baselines of NES-RS, our algorithm can achieve improved single-model performances while maintaining comparably diverse model predictions. These results suggest that our NESBS is able to select ensembles achieving a better trade-off between the single-model performances and the diversity of model predictions among these baselines, which is known to be an important criterion for well-performing ensembles \citep{zhou-ensemble}. Thus, Table \ref{tab:perf_vs_diver} and Fig.~\ref{fig:perf_vs_diver} provide empirical justifications for the improved effectiveness of NESBS.

\begin{figure}[t]
\centering
\begin{tabular}{cc}
    \hspace{-6mm}\includegraphics[width=0.525\columnwidth]{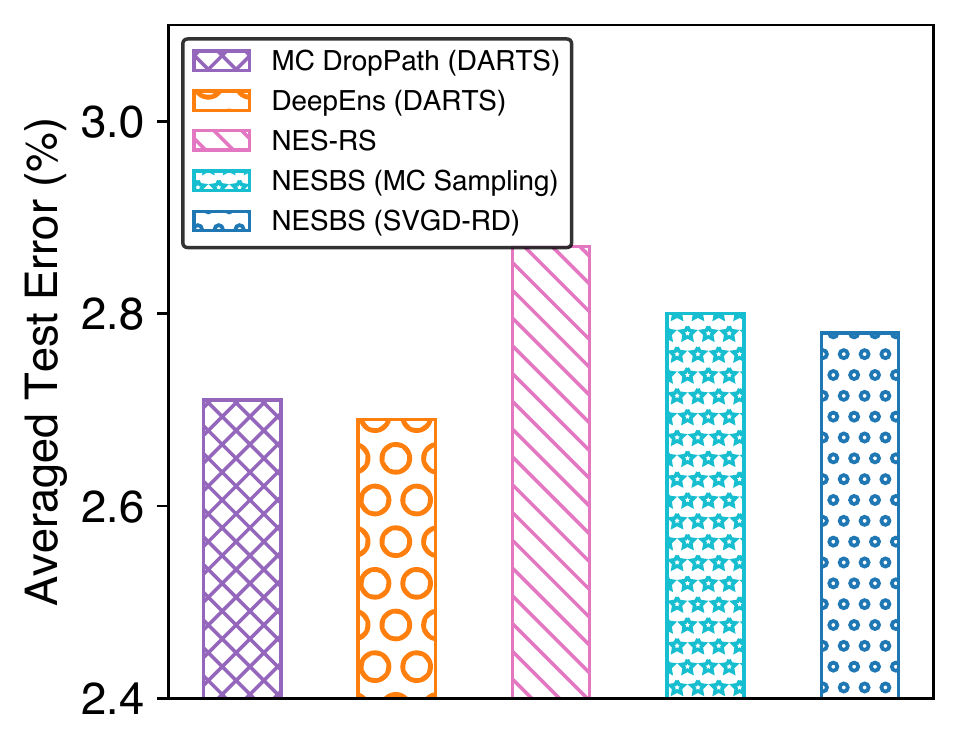} & \hspace{-6mm}\includegraphics[width=0.485\columnwidth]{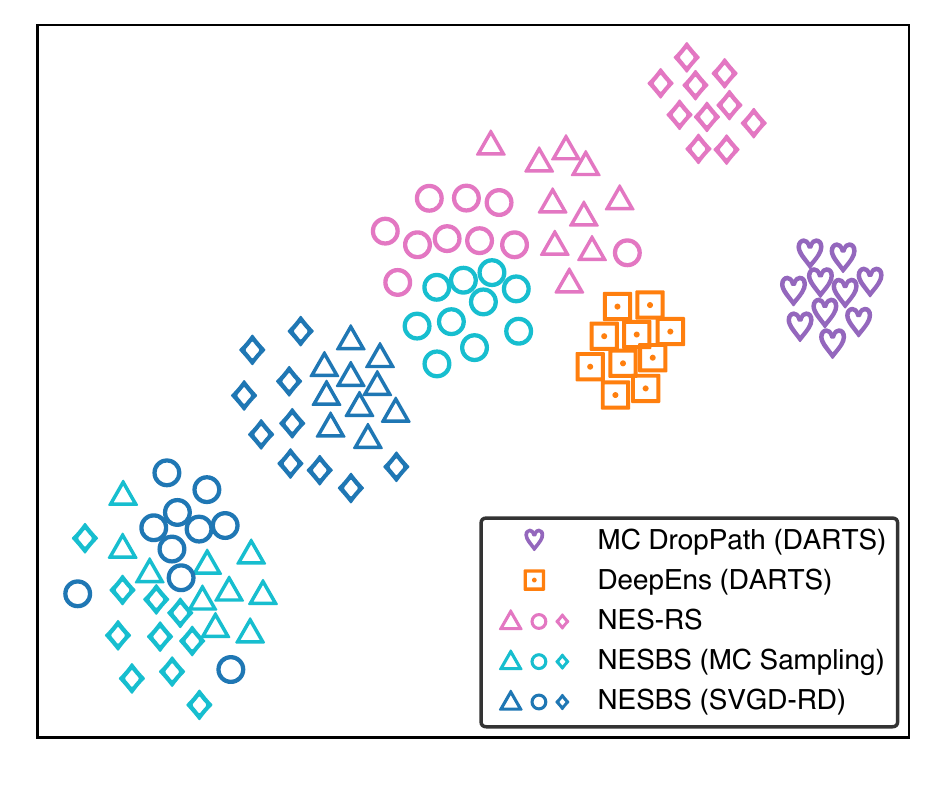} \\
    {\hspace{-3mm} (a) Single-model performances} & \hspace{-6mm} {(b) Diverse predictions}
\end{tabular}
\caption{Qualitative comparison of (a) the single-model performances and (b) the diverse model predictions achieved by different ensemble (search) algorithms with an ensemble size of $n=3$ on CIFAR-10. Each architecture in (b) is independently evaluated for ten times to visualize their model predictions, which follows from DeepEns.}
\label{fig:perf_vs_diver}
\end{figure}

\section{Conclusion}
This paper presents a novel neural ensemble search algorithms, called NESBS, that can effectively and efficiently select well-performing neural network ensembles with diverse architectures from a NAS search space. Our extensive experiments have shown that NESBS is able to achieve improved performances while preserving a comparable search cost compared with conventional NAS algorithms. Moreover, even compared with other ensemble (search) baselines (e.g., DeepEns and NES-RS), our NESBS is also capable of enjoying boosted search effectiveness and efficiency, which further suggests the superior performance of our NESBS in practice.

\begin{acknowledgements} % will be removed in pdf for initial submission,
                         % so you can already fill it to test with the
                         % ‘accepted’ class option
This research/project is supported by A*STAR under its RIE$2020$ Advanced Manufacturing and Engineering (AME) Programmatic Funds (Award A$20$H$6$b$0151$).

\end{acknowledgements}

% \nocite{pc-darts, concrete, gumbel-softmax, adam, bo, cutout, dropnas}
\bibliography{shu_406}

\newpage
\appendix
\begin{appendices}
% \onecolumn
\section{Proofs}\label{sec:proofs}

\begin{proposition*}{\textbf{\normalfont (Training fairness in the supernet.)}}\label{prop:fair}
Let $T$ and $T_{\gA}$ denote the number of steps applied to train the supernet and candidate architecture $\gA$ in the search space of size $N$, by uniformly randomly sampling a single architecture from this search space for the model training in each step, we have
\begin{equation*}
    \begin{gathered}
        \normalfont\text{Pr}(\lim_{T\rightarrow\infty} T_{\gA_i} / T = \lim_{T\rightarrow\infty} T_{\gA_j} / T) = 1 \quad \forall i,j \in \{1, \cdots, N\} \ .
    \end{gathered}
    \end{equation*}
\end{proposition*}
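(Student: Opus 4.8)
The plan is to recognize this statement as a direct consequence of the strong law of large numbers (SLLN). First I would introduce indicator random variables encoding the sampling scheme of Sec.~\ref{sec:oneshot}: for each architecture $\gA_i$ and each training step $t$, let $X_t^{(i)}$ be the indicator that $\gA_i$ is the architecture sampled at step $t$. Because a single architecture is drawn uniformly at random and independently at every step, the sequence $\{X_t^{(i)}\}_{t\ge 1}$ consists of i.i.d.\ Bernoulli$(1/N)$ random variables with common mean $\E[X_t^{(i)}]=1/N$. By definition of $T_{\gA_i}$, the count of steps in which $\gA_i$ is trained, we have $T_{\gA_i}=\sum_{t=1}^{T} X_t^{(i)}$, so that $T_{\gA_i}/T$ is precisely the empirical average of these indicators.

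Next I would apply the SLLN to the average $T_{\gA_i}/T = \frac{1}{T}\sum_{t=1}^T X_t^{(i)}$, which converges almost surely to the common mean $1/N$. This yields, for each fixed index $i$, the almost-sure statement $\text{Pr}(\lim_{T\to\infty} T_{\gA_i}/T = 1/N) = 1$, and identically for the index $j$.

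The final step is to combine the two probability-one events. Writing $E_i=\{\lim_{T\to\infty} T_{\gA_i}/T = 1/N\}$, I have $\text{Pr}(E_i)=\text{Pr}(E_j)=1$, hence by subadditivity (the union bound) $\text{Pr}((E_i\cap E_j)^c)\le \text{Pr}(E_i^c)+\text{Pr}(E_j^c)=0$, so that $\text{Pr}(E_i\cap E_j)=1$. On the event $E_i\cap E_j$ both limits exist and equal $1/N$, so they coincide; therefore $\text{Pr}(\lim_{T\to\infty} T_{\gA_i}/T = \lim_{T\to\infty} T_{\gA_j}/T)=1$ for every pair $i,j\in\{1,\dots,N\}$, which is exactly the claim.

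Since the argument reduces to a clean invocation of the SLLN, I do not anticipate any substantial obstacle. The only point requiring a little care is that the statement concerns the equality of two \emph{random} limits rather than convergence to a fixed constant, so the argument must route through the intermediate fact that each ratio converges almost surely to the same deterministic value $1/N$, and then intersect the two probability-one events to conclude their equality almost surely.
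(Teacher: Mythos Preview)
Your proposal is correct and uses essentially the same approach as the paper: both introduce indicator variables for the sampling at each step and invoke the strong law of large numbers. The only minor difference is that you apply the SLLN to each marginal $T_{\gA_i}/T$ separately (identifying the common limit $1/N$ and then intersecting the two almost-sure events), whereas the paper applies the SLLN directly to the i.i.d.\ differences $X_i^t-X_j^t$ to conclude $T^{-1}(T_{\gA_i}-T_{\gA_j})\to 0$ almost surely; your version is slightly more informative since it pins down the actual limiting value.
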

\begin{proof}
Let random variable $X_i^t \in \{0,1\}$ denote the selection of candidate architecture $\gA_i$ at training step $t$ under our sampling scheme in the proposition above. For any $t>0$ and $i,j \in [N]$, random variable $X_i^t - X_j^t$ can achieve following possible assignments and probabilities (denoted by $p$):
\begin{equation}
\begin{gathered}
    X_i^t - X_j^t = 
    \left\{\begin{array}{rcl}
             +1, & & p = 1/N \\
             0, & & p = (N-2)/N \\
             -1, & & p = 1/N
    \end{array}\right. \ .
\end{gathered}
\end{equation}
Consequently, $\E[X_i^t - X_j^t] = 0$. According to the strong law of large numbers, we further have
\begin{equation}
\begin{gathered}
    \text{Pr}(\lim_{T\rightarrow\infty} T^{-1}\sum_{t=1}^T X_i^t - X_j^t = 0) = 1 \ .
\end{gathered}
\end{equation}
Note that 
\begin{equation}
\begin{gathered}
    T^{-1}(T_{\gA_i} - T_{\gA_j}) = T^{-1}\sum_{t=1}^T X_i^t - X_j^t \ .
\end{gathered}
\end{equation}
We thus can complete this proof by
\begin{equation}
\begin{aligned}
    &\text{Pr}(\lim_{T\rightarrow\infty} T^{-1}(T_{\gA_i} - T_{\gA_j}) = 0) = 1 \ .
\end{aligned}
\end{equation}
\end{proof}

\paragraph{Proof of Proposition \ref{prop:svgdrd-update}.}
As particles $\{\vx_i\}_{i=1}^n$ of size $n$ are applied to approximate the density $q$ in our SVGD-RD, the second term (i.e., the controllable diversity term) in our \eqref{eq:min-kl-max-div} can then be approximated using these particles as
\begin{equation}
\begin{aligned}
    n\delta \E_{\vx,\vx' \sim q} \left[k(\vx, \vx')\right] &\approx  \delta/n \sum_{i=1}^n\sum_{j=1}^n k(\vx_i, \vx_j) \triangleq \sum_{i=1}^n h(\vx_i) \ ,
\end{aligned}
\end{equation}
where $h(\vx)\triangleq\delta/n\sum_{j=1}^n k (\vx, \vx_j)$. We take $\vx_j$ in $k(\vx, \vx_j)$ as a constant for the approximation above. Consequently, we have
\begin{equation}\label{eq:equal-grad}
\begin{aligned}
    \nabla_{\vx_k}\sum_{i=1}^n h(\vx_i) = \nabla_{\vx_k}h(\vx_k) \ .
\end{aligned}
\end{equation}

Let $\vx_i^{+} \triangleq \vx_i + \epsilon\vphi^*(\vx_i)$ ($\forall i\in\{1,\cdots,n\}$) denote the functional gradient decent in the RKHS $\gH$ to minimize the KL divergence term in our \eqref{eq:min-kl-max-div}. Based on \eqref{eq:equal-grad} above, given proximal operator $\text{prox}_h(\vx^{+})=\argmin_{\vy} h(\vy)+1/2\|\vy-\vx^{+}\|_2^2$, by using proximal gradient method \cite{proximal}, our \eqref{eq:min-kl-max-div} can then be optimized via the following update to each particle $\vx_i$:
\begin{equation}
    \vx_i \leftarrow \text{prox}_h(\vx_i^{+})=\argmin_{\vy} h(\vy)+1/2\|\vy-\vx_i^{+}\|_2^2 \ .
\end{equation}

According to the \textit{Karush-Kuhn-Tucker} (KKT) conditions, the local optimum $\vy^*$ of this proximal operator satisfies
\begin{equation}
\begin{aligned}
    \text{prox}_h(\vx_i^{+}) = \vy^* = \vx_i^{+} - \nabla_{\vy^*} h(\vy^*)\ .
\end{aligned}
\end{equation} 
When $h(\cdot)$ is convex, this local optimum is also a global optimum. As (16) is intractable to solve given a complex $h(\cdot)$, we approximate $h(\vy^*)$ with its first-order Taylor expansion, i.e., $h(\vy^*) \approx h(\vx_i) + \nabla_{\vx_i} h(\vx_i)(\vy^* - \vx_i)$ and achieve following approximation:
\begin{equation}
\begin{aligned}
    \text{prox}_h(\vx_i^{+}) &\approx \vx_i^{+} - \nabla_{\vx_i} h(\vx_) \\
    &\approx \vx_i + \epsilon\vphi^*(\vx_i) - \nabla_{\vx_i} h(\vx_i) \\
    &\approx \vx_i + \epsilon\vphi^*(\vx_i) - \delta/n \textstyle\sum_j^n \nabla_{\vx_i} k(\vx_i, \vx_j) \ .
\end{aligned}
\end{equation}

Given the approximation $\vphi^*(\vx_i) \approx \widehat{\vphi}^*(\vx_i)$ and the definition of $\widehat{\vphi}^*(\vx_i)$ in \eqref{eq:svgd-approx}, we complete our proof by
\begin{equation}
\begin{aligned}
    \vx_i \leftarrow \vx_i &+ 1/n \textstyle\sum_{j=1}^n k(\vx_j, \vx_i)\nabla_{\vx_j}\log p(\vx_j) \\
    & + \nabla_{\vx_j}k(\vx_j, \vx_i) - \delta \nabla_{\vx_i} k(\vx_j, \vx_i) \ .
\end{aligned}
\end{equation}

\paragraph{Proof of Proposition \ref{prop:exploitation}.}
Notably, since $k(\vx,\vx')=c$ when $\vx=\vx'$, we will achieve a constant $k(\vx, \vx)$ for any particle $\vx$ in the case of $n=1$, which can be ignored in our SVGD-RD for any $\delta \in \mathbb{R}$. In light of this, our SVGD-RD in the case of $n=1$ degenerates into standard SVGD. Consequently, to prove Proposition \ref{prop:exploitation}, we only need to consider SVGD in the case of $n=1$.

Considering SVGD in the case of $n=1$, we can frame the density $q$ represented by a single particle $\vx'$ as 
\begin{equation}
\begin{aligned}
    q(\vx) = 
    \left\{\begin{array}{rcl}
             1 & & \vx = \vx' \\
             0 & & \vx \neq \vx'
    \end{array}\right. \ .
\end{aligned}
\end{equation}
The KL divergence between $q(\vx)$ and the target density $p(\vx)$ can then be simplified as
\begin{equation}
\begin{aligned}
    \text{KL}(q \| p) &= \E_{q(\vx)}[\log(q(\vx)/p(\vx))] = -\log p(\vx') \ .
\end{aligned}
\end{equation}
Finally, standard SVGD in the case of $n=1$ obtain its optimal particle by optimizing the following problem:
\begin{equation}
\begin{aligned}
    q^* &= \argmin_q \text{KL}(q \| p) \\
    & = \argmin_{\vx'} \{-\log p(\vx')\} \\
    & = \argmax_{\vx'} p(\vx') \ ,
\end{aligned}
\end{equation}
which finally concludes the proof.

\begin{remark}
\emph{
In practice, this $k(\vx,\vx)=c$ can be well satisfied, such as the radial basis function (RBF) kernel that we have applied in our experiments.
}
\end{remark}

\section{Experimental Settings}\label{sec:setting-exp}
\subsection{The DARTS Search Space}\label{sec:setting-search-space}
In the DARTS \citep{darts} search space, each candidate architecture consists of a stack of $L$ cells, which can be represented as a directed acyclic graph (DAG) of $N$ nodes denoted by $\{z_0, z_1, \dots, z_{N-1}\}$. Among these $N$ nodes in a cell, $z_0$ and $z_1$ denote the input nodes produced by two preceding cells, and $z_N$ denotes the output of a cell, which is the concatenation of all intermediate nodes, i.e., from $z_2$ to $z_{N-1}$. As in the work of \citet{darts}, to select the best-performing architectures, we need to select their corresponding cells, including the normal and reduction cell. We refer to the DARTS paper for more details.
In practice, this search space is conventionally represented as a supernet stacked by 8 cells (6 normal cells and 2 reduction cells) with initial channels of 16.

\subsection{Model Training of Supernet}\label{sec:setting-one-shot}
Following \citep{pc-darts}, we apply a partial channel connection with $K=2$ in the model training of the supernet, which allows us to accelerate and reduce the GPU memory consumption during this model training. We split the standard training dataset of CIFAR-10 into two piles in our ensemble search: 70\% randomly sampled data is used in the model training of the supernet, and the rest is used to obtain the posterior distribution of neural architectures in Sec. \ref{sec:posterior} and also the final selected ensembles in Sec. \ref{sec:sampling}. To achieve not only a fair but also a sufficient model training for every candidate architecture, we apply \textit{stochastic gradient descent} (SGD) with epoch 50, learning rate cosine scheduled from 0.1 to 0, momentum 0.9, weight decay $3\times10^{-4}$ and batch size 128 in the model training of the supernet, where only a single candidate architecture is uniformly randomly sampled from this supernet in every training step.

\subsection{Posterior Distribution}\label{sec:setting-posterior}
\paragraph{Variational posterior distribution.}
Following \citep{snas}, the variational posterior distribution of architectures is represented as $p_{\valpha}(\gA)$ parameterized by $\valpha$. Specifically, within the search space demonstrated in our Appendix \ref{sec:setting-search-space}, each intermediate nodes $z_i$ is the output of one selected operation $o \sim p_{\valpha_i}(o)$ using the inputs from its proceeding nodes or cells, where $\gO$ is a predefined operation set for our search. Specifically, given $\valpha_i=(\alpha_i^{o_1} \cdots \alpha_i^{o_{|\gO|}})$, $p_{\valpha_i}(o)$ can be represented as 
\begin{equation}\label{eq:discrete-dist}
    p_{\valpha_i}(o) = \frac{\exp(\alpha_i^o/\tau)}{\sum_{o \in \gO}\exp(\alpha_i^o/\tau)} \ ,
\end{equation}
where $\tau$ denotes the softmax temperature, which is usually set to be 1 in practice. Based on this defined probability for each intermediate node $z_i$, our variational posterior distribution can be framed as
\begin{equation}
    p_{\valpha}(A) = \prod_{i=2}^{N-2}p_{\valpha_i}(o) \ .
\end{equation}
More precisely, this representation is applied for single-path architecture with identical cells. We use it to ease our representation. For double-path architectures consisting of two different cells (i.e., normal and reduction cell), e.g., the candidate architecture in the DARTS search space, a similar representation can be obtained.

\paragraph{Optimization details.}
To optimize \eqref{eq:elbo}, we firstly relax our variational posterior distribution to be differentiable using the Straight-Through (ST) Gumbel-Softmax \citep{concrete, gumbel-softmax} with the reparameterization trick. More precisely, we propose a variant of ST Gumbel-Softmax outputting the double-path architectures in the DARTS search space. Then, we use stochastic gradient-based algorithms to optimize \eqref{eq:elbo} efficiently. In each optimization step, we sample one neural architecture from the distribution $p_{\valpha}(\gA)$ to estimate $\E_{\gA \sim p_{\valpha}(\gA)}\left[\log p(\gD|\gA)\right]$ (i.e., the commonly used Cross-Entropy loss). In practice, we use Adam \citep{adam} with learning rate 0.01, $\beta_1=0.9$, $\beta_2=0.999$ and weight decay $3\times10^{-4}$ to update our variational posterior distribution $p_{\valpha}(\gA)$ for 20 epochs. 

\subsection{SVGD of Regularized Diversity}\label{sec:setting-svgd}
\paragraph{Continuous relaxation of variational posterior distribution.}
Notably, SVGD \citep{svgd} and also our SVGD-RD is applied for continuous distribution. Unfortunately, the variational posterior distribution $p_{\valpha}(\gA)$ is discrete due to a discrete search space. To apply SVGD-RD, we firstly relax this discrete posterior into its continuous counterpart using a mixture of Gaussian distribution. Specifically, we represent each operation $o\in\gO$ in \eqref{eq:discrete-dist} into a one-hot vector $\vh_o$. By introducing the random variable $\vo_i\in\mathbb{R}^{|\gO|}$ and multi-variate normal distribution $\gN(\vo_i | \vh_o, \Sigma)$ into our relaxation, our relaxed posterior distribution of neural architectures can be framed as
\begin{equation}\label{eq:relax-dist}
\begin{gathered}
    \widehat{p}_{\valpha}(\gA) = \prod_{i=2}^{N-2} 1/Z_i\sum_{o \in \gO} p_{\valpha_i}(o) \gN(\vo_i | \vh_o, \Sigma) \ ,
\end{gathered}
\end{equation}
where $Z_i$ denotes the normalization constant. Given the sampled particle $\vx^*=(\cdots \vo_i^* \cdots)$ in SVGD-RD, the final selected architecture can then be derived using the determination of each selected operation $o_i^*$, i.e., 
\begin{equation}
    o_i^* = \argmin_{o \in \gO} \|\vo_i^* - \vh_o\|_2 \ .
\end{equation}

\paragraph{Optimization details.} 
Since \citet{svgd} have demonstrated that SVGD is able to handle unnormalized target distributions, the normalization constant in \eqref{eq:relax-dist} can then be ignored in our SVGD-RD algorithm. In practice, the covariance matrix $\Sigma$ in \eqref{eq:relax-dist} is set to an identity matrix scaled by $|\gO|$. Besides, the parameter $\delta$ is optimized as a hyper-parameter via grid search or Bayesian Optimization \citep{bo} within the range of $[-2, 1]$ in practice. To obtain well-performing particles in our SVGD-RD algorithms efficiently, we apply SGD using the gradient provided in Sec. \ref{sec:svgd-rd} with a \textit{radial basis function} (RBF) kernel on randomly initialized particles for $L{=}1000$ iterations under a learning rate of 0.1 and a momentum of 0.9.

\subsection{Evaluation on Benchmark Datasets}\label{sec:setting-training}
\paragraph{Evaluation on CIFAR-10/100.} 
We apply the same constructions in DARTS \citep{darts} for our final performance evaluation on CIFAR-10/100: The final selected architectures consist of 20 cells, and 18 of them are identical normal cells, with the rest being the identical reduction cell. An auxiliary tower with a weight of 4 is located at the 13-th cell of the final selected architectures. The final selected architecture is then trained using stochastic gradient descent (SGD) for 600 epochs with a learning rate cosine scheduled from 0.025 to 0, momentum 0.9, weight decay $3\times10^{-4}$, batch size 96 and initial channels 36. Cutout \citep{cutout}, and a scheduled DropPath, i.e., linearly decayed from 0.2 to 0,  are employed to achieve SOTA generalization performance.

\paragraph{Evaluation on ImageNet.}
Following \citep{darts}, the architectures evaluated on ImageNet consist of 14 cells (12 identical normal cells and 2 identical reduction cells). To meet the requirement of evaluation under the mobile setting (less than 600M multiply-add operations), the number of initial channels for final selected architectures are conventionally set to 44. We adopt the training enhancements in \cite{darts,p-darts,sdarts}, including an auxiliary tower of weight 0.4 and label smoothing. Following P-DARTS \cite{p-darts} and SDARTS-ADV \cite{sdarts}, we train the selected architectures from scratch for 250 epochs using a batch size of 1024 on 8 GPUs, SGD optimizer with a momentum of 0.9 and a weight decay of $3\times10^{-5}$. The learning rate applied in this training is warmed up to 0.5 for the first 5 epochs and then decreased to zero linearly.

\subsection{Adversarial Defense}\label{sec:setting-adversarial}
Adversarial attack intends to find a small change for each input such that this input with its corresponding small change will be misclassified by a model. As ensemble is known to be a possible defense against such adversarial attacks \citep{ensemble-for-defense}, we also examine the effectiveness of our NESBS algorithm by comparing the model robustNESBS achieved by our algorithms to other ensemble and ensemble search algorithms under various benchmark adversarial attacks. To the best of our knowledge, we are the first to examine the advantages of ensemble search algorithms in defending against adversarial attacks.

In this experiment, two processes are required, i.e., \emph{attack} and \emph{defense}. The \emph{attack} process is a typical white-box attack scenario: Only a single model (randomly sampled from an ensemble) is attacked by an attacker, and this process will be repeated for $n$ rounds given an ensemble of size $n$ in order to accurately measure the improvement of model robustNESBS induced by an ensemble. In each round, a different model from this ensemble is selected to be attacked. 
The \emph{defense} process is then applied using neural network ensembles, i.e., neural network ensembles will make predictions based on those perturbed images produced by the aforementioned attacker. Corresponding to the \emph{attack} process, we also need to repeat this defense process for $n$ rounds. In fact, such an adversarial defense setting is reasonably practical when only a single model from an ensemble is required to be publicly available for model producers.

We apply the following attacks in our experiment: The \emph{Fast Gradient Signed Method} (FGSM) attack \cite{fgsm}, the \emph{Projected Gradient Descent} (PGD) attack \cite{pgd}, the \emph{Carlini Wagner} (CW) attack \cite{cw} and the AutoAttack~\citep{autoattack}. In both the FGSM attack and the PGD attack, we impose a $L_\infty$ norm constrain of $0.01$. The step size and the number of iterations in the PGD attack are set to $0.008$ and $40$, respectively. We adopt the same configurations of the CW attack under a $L_2$ norm constrain in \citep{cw}: We set the confidence constant, the range of constant $c$, the number of binary search steps, and the maximum number of optimization steps to $0$, $[0.001,10]$, $3$, and $50$, respectively; we then adopt Adam \citep{adam} optimizer with learning rate $0.01$ and $\beta_1=0.9$, $\beta_2=0.999$ in its search process. Besides, we adopt the same configuration of AutoAttack from \citep{autoattack}.

\section{Complementary Results}

\subsection{Ensemble Performance Estimation}\label{sec:exp-oneshot}
\begin{table}
\renewcommand\multirowsetup{\centering}
\centering
\begin{tabular}{lcccc}
\toprule
\textbf{Metric} & 
\textbf{$n=1$} & 
\textbf{$n=3$} & 
\textbf{$n=5$} & 
\textbf{$n=7$} \\
\midrule 
Spearman & 0.65 & 0.33 & 0.40 & $-$0.12 \\
Pearson & 0.82 & 0.45 & 0.45 & $-$0.16 \\
Agreement-30\% & 33\% & 20\% & 31\% & 25\% \\
\bottomrule
\end{tabular}
\caption{The correlation between the estimated and true performances of candidate architectures and their ensembles in the DARTS search space on CIFAR-10.}
\label{tab:rank}
\end{table}

As shown in Sec. \ref{sec:oneshot}, we apply the model parameters inherited from a trained supernet to estimate the performance of candidate architectures as well as their ensembles in our NESBS algorithm. We therefore use the following three metrics to measure the effectiveness of such estimation in the DARTS search space: the Spearman's rank order coefficient between the estimated and true performances, the Pearson correlation coefficient between the estimated and true performances, and the percentage of architectures achieving both Top-$k$ estimated performance and Top-$k$ true performances (named the Agreement-$k$). Since the evaluation of the true performances is prohibitively costly, we randomly sample 10 architectures of diverse estimated performances from the DARTS search space for this experiment. Notably, based on these 10 architectures, there are hundreds of possible ensembles under the ensemble size of 3, 5, 7, which we believe is sufficiently large to validate the effectiveness of our performance estimations. To obtain the true performance of candidate architectures as well as their ensembles, we train these architectures independently for 100 epochs following the settings in Appendix \ref{sec:setting-training}. 

Table \ref{tab:rank} summarizes the results. Notably, the estimated and true performances are shown to be positively correlated in the case of $n{=}1,3,5$ by achieving relatively high Spearman and Pearson coefficients as well as a  high agreement in these cases. Although the coefficients are low when the ensemble size is larger (i.e., $n{=}7$), the estimated and true performances are still capable of achieving a reasonably good agreement in this case. 
Based on these results, we argue that our estimated ensemble performance is informative and effective for our ensemble search. This effectiveness can also be supported by the competitive search results achieved by our NESBS in Sec. \ref{sec:darts}.

\begin{figure}[t]
\centering
\resizebox{0.75\columnwidth}{!}{
\begin{tabular}{cc}
    \includegraphics[width=0.36\textwidth]{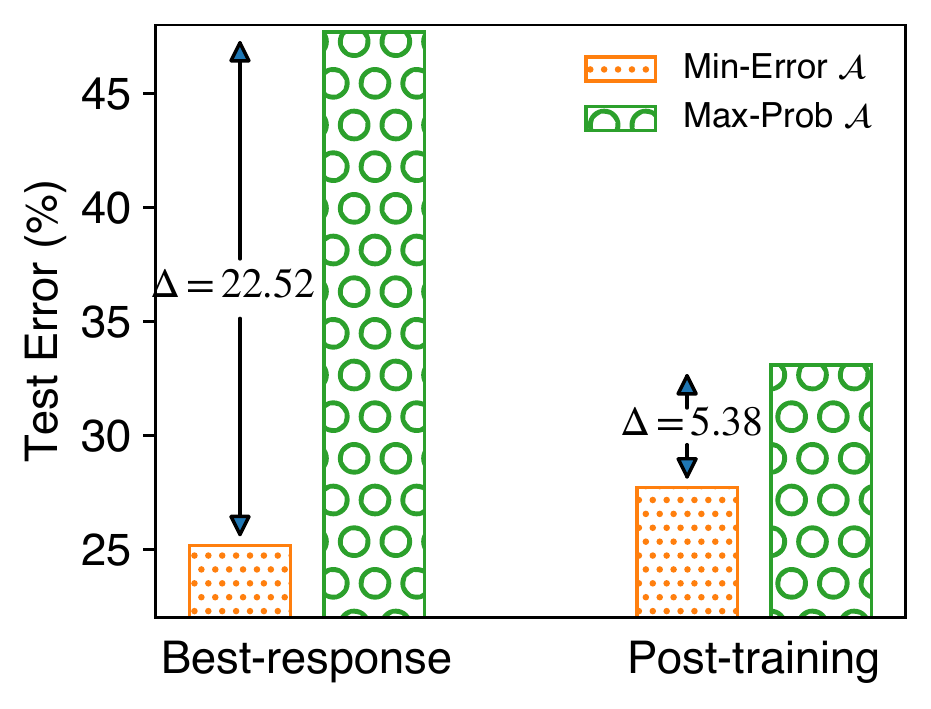}
\end{tabular}
}
\caption{The comparison of performance discrepancy with the post-training and best-response posterior distribution on CIFAR-10. This performance discrepancy is measured by the gap of test error between the best-performing architecture (i.e., the architecture with the smallest test error) and the maximal-probability architecture (i.e., the architecture with the largest probability in the corresponding posterior distribution) in the DARTS search space.}
\label{fig:best_vs_post}
\end{figure}

\subsection{Post-training vs.~Best-response Posterior Distribution}\label{sec:exp-best_vs_post}
To examine the advantages of our post-training posterior distribution, we compare it with its best-response counterpart applied in \citep{gdas, snas}. While our post-training posterior distribution is obtained \emph{after} the model training of the supernet, the best-response posterior distribution is updated \emph{during} the model training of the supernet. We refer to \citep{gdas, snas} for more details about this best-response posterior distribution. We follow the optimization details in Appendix \ref{sec:setting-one-shot} and \ref{sec:setting-posterior} to obtain these two posterior distributions.

\paragraph{More accurate characterization of single-model performances using post-training posterior distribution.}
We firstly compare the characterization of single-mode performance using these two posterior distributions by examining the performance discrepancy between their best-performing architecture (i.e., the architecture achieving the smallest test error) and maximal-probability architecture (i.e.,  the architecture achieving the largest probability in the corresponding posterior distribution) in the search space. In this experiment, the performance discrepancy is measured by the gap of test error achieved by the best-performing architecture and the maximal-probability architecture using the model parameters inherited from the supernet.

Figure \ref{fig:best_vs_post} illustrates the comparison. The results show that our post-training posterior distribution enjoys a smaller performance discrepancy, suggesting that our post-training posterior distribution is able to provide a more accurate characterization of the single-model performances. Interestingly, the best-response counterpart contributes to the best-performing architecture with a lower test error than our post-training posterior distribution, which should result from the Matthew Effect as justified in \citep{dropnas}. Specifically, well-performing architectures contribute to the frequent selections of these architectures for their model training during the optimization of the best-response posterior distribution. This will finally result in unfair model training in the search space and therefore the inaccurate characterization of single-model performances. Notably, we need a more accurate characterization of single-mode performance in this paper, as shown in Sec. \ref{sec:posterior}. Therefore, our post-training posterior distribution should be more suitable than its best-response counterpart in our ensemble search.

\paragraph{Improved performance of selected ensembles using post-training posterior distribution.}
We then compare the final ensemble test performance achieved by our NESBS algorithm using the post-training posterior distribution and its best-response counterpart on CIFAR-10 with the ensemble size of $n=3$. To obtain the final ensemble performance, we train each architecture in an ensemble for 100 epochs following the settings in Appendix \ref{sec:setting-training}. Table \ref{tab:best_vs_post} summarizes the results. Notably, our post-training posterior distribution is shown to be capable of contributing to an improved ensemble performance than its best-response counterpart, which further demonstrates the advantages of applying the post-training posterior distribution in our ensemble search.

\begin{table}[t]
\renewcommand\multirowsetup{\centering}
\centering
\resizebox{\columnwidth}{!}{
\begin{tabular}{lcc}
\toprule
\textbf{Method} & 
\textbf{Best-response} & 
\textbf{Post-training} \\
\midrule 
NESBS (MC Sampling) & 4.74 & \textbf{4.54}$_{\Delta{=}0.20}$ \\
NESBS (SVGD-RD) & 4.81 & \textbf{4.48}$_{\Delta{=}0.33}$\\
\bottomrule
\end{tabular}
}
\caption{The comparison of true ensemble test error (\%) on CIFAR-10 achieved by our NESBS algorithm using the post-training posterior distribution and its best-response counterpart with an ensemble size of $n{=}3$. We use $\Delta$ to denote the improved generalization performance achieved by our post-training posterior distribution.}
\label{tab:best_vs_post}
\end{table}

\subsection{Effectiveness and Efficiency}\label{sec:efficient-and-effective}

\begin{figure}[t]
\centering
\includegraphics[width=\columnwidth]{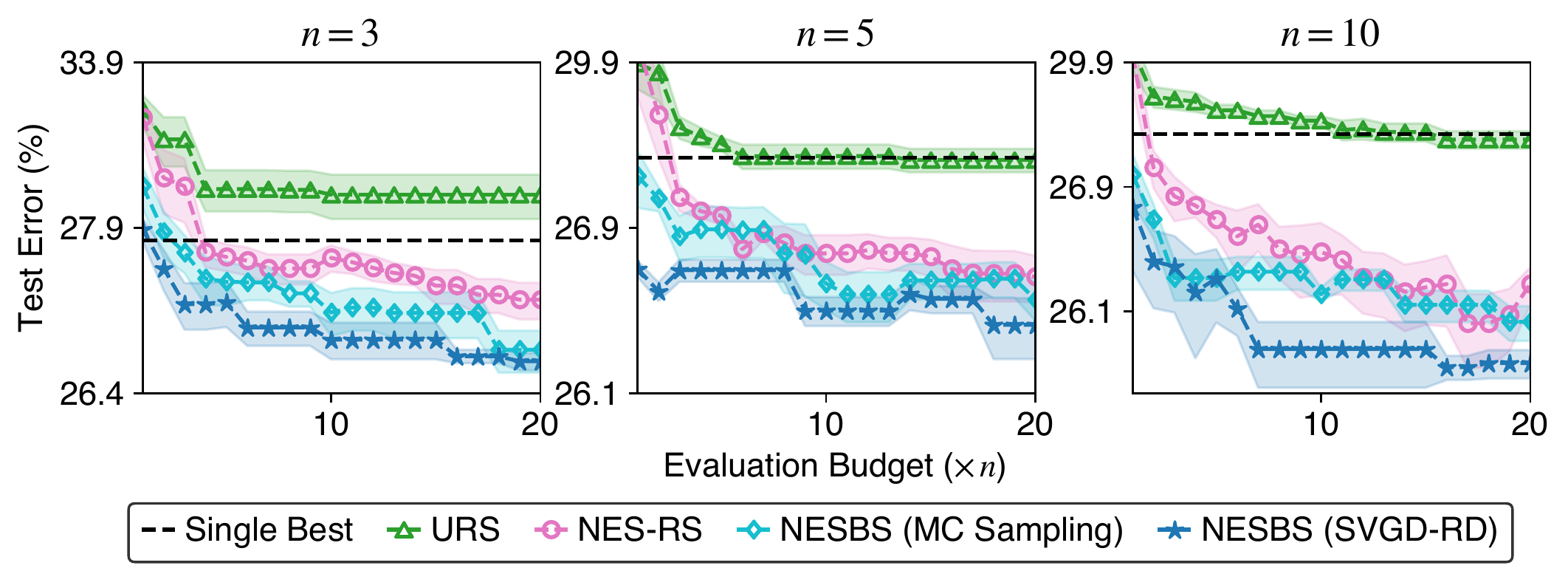}
\caption{The comparison of search effectiveness (test error of ensembles in the $y$-axis) and efficiency (evaluation budget in the $x$-axis) for different ensemble search algorithms under varying ensemble size $n$. The single best baseline refers to the single best architecture achieving the lowest test error in the search space. The $y$-axis is shown in log-scale to ease visualization. Note that the test error for each algorithm is reported with the mean and standard error of five independent trials.
}
\label{fig:search-efficiency}
\end{figure}

As justified in Sec \ref{sec:sampling}, both our MC Sampling and SVGD-RD algorithms can sample neural architectures with competitive single-model performances and diverse model predictions, which are known to be the criteria for well-performing ensembles \citep{zhou-ensemble}. 
To further demonstrate that our algorithms are capable of selecting well-performing ensembles effectively and efficiently based on this sampling property,
we compare our NESBS algorithm, including NESBS (MC Sampling) and NESBS (SVGD-RD), with the following ensemble search baselines on CIFAR-10 \citep{cifar} in the DARTS \citep{darts} search space: (a) Uniform random sampling which we refer to as URS, and (b) NES-RS \citep{nes}. That is, we only replace the Bayesian sampling in our NESBS algorithm with these two different sampling/selection algorithms in this experiment and we keep using the model parameters inherited from a supernet to estimate the single-model and ensemble performances of architectures (including the test errors).
The detailed experimental settings are in Appendix \ref{sec:setting-exp}.

Figure \ref{fig:search-efficiency} illustrates the search results. Note that both NES-RS and our NESBS are able to achieve lower test errors than the single best-performing architecture in the search space. These results therefore demonstrate that these two ensemble search algorithms are indeed capable of achieving improved performance over conventional NAS algorithms that select only one single architecture from the search space. More importantly, given the same evaluation budgets, our NESBS algorithm consistently achieves lower test errors than URS and NES-RS, indicating the superior search effectiveness achieved by our NESBS algorithm. 
Meanwhile, our NESBS algorithm requires fewer evaluation budgets than URS and NES-RS to achieve comparable test errors, which also suggests that our algorithm is more efficient than URS and NES-RS.
Interestingly, compared with MC Sampling, SVGD-RD can consistently produce improved search effectiveness and efficiency, which likely results from its controllable trade-off between the single-model performances and the diverse model predictions as justified in Sec.~\ref{sec:sampling}.
Overall, these results have well justified the effectiveness and efficiency of our NESBS algorithm.

\begin{figure}[t]
\centering
\includegraphics[width=\columnwidth]{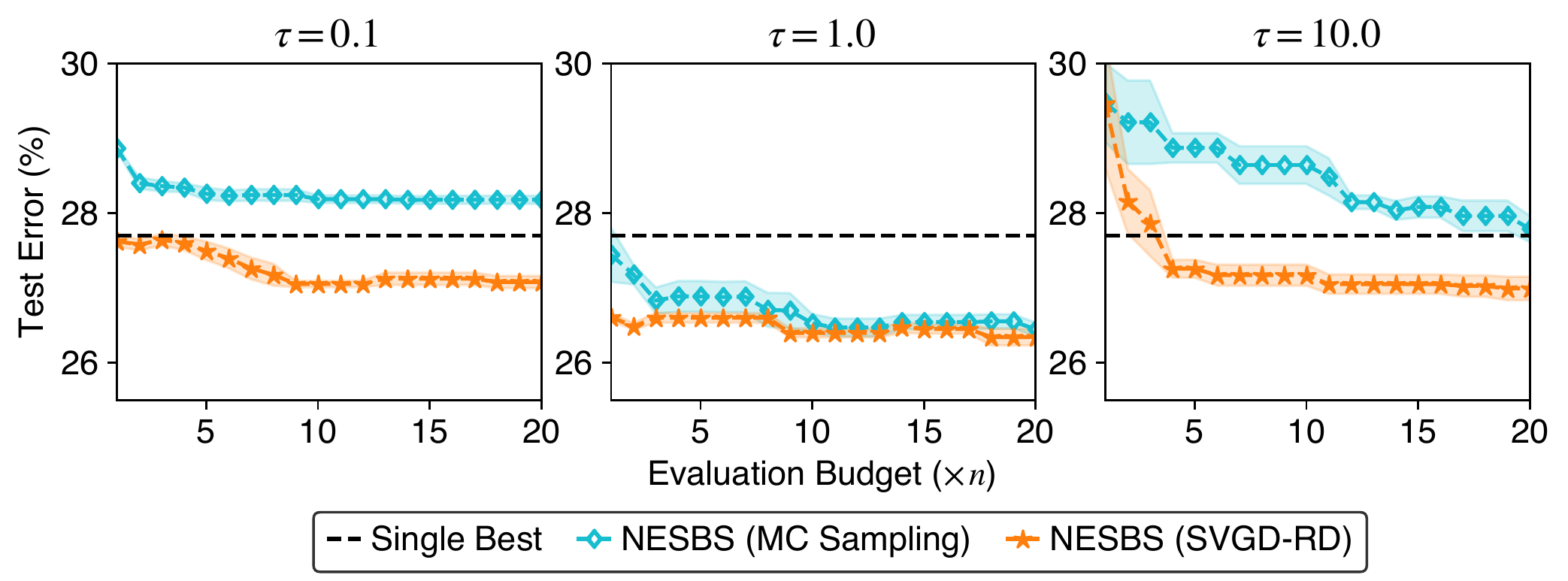}
\caption{
The comparison of search effectiveness (test error of ensembles in the $y$-axis) and efficiency (evaluation budget in the $x$-axis) between our NESBS (MC Sampling) and NESBS (SVGD-RD) algorithm under varying softmax temperature $\tau$. The single best baseline refers to the single best architecture achieving the lowest test error in the search space. Each test error is reported with the mean and standard error of five independent trials.}
\label{fig:robust-tau}
\end{figure}

\begin{figure}[t]
\centering
\includegraphics[width=\columnwidth]{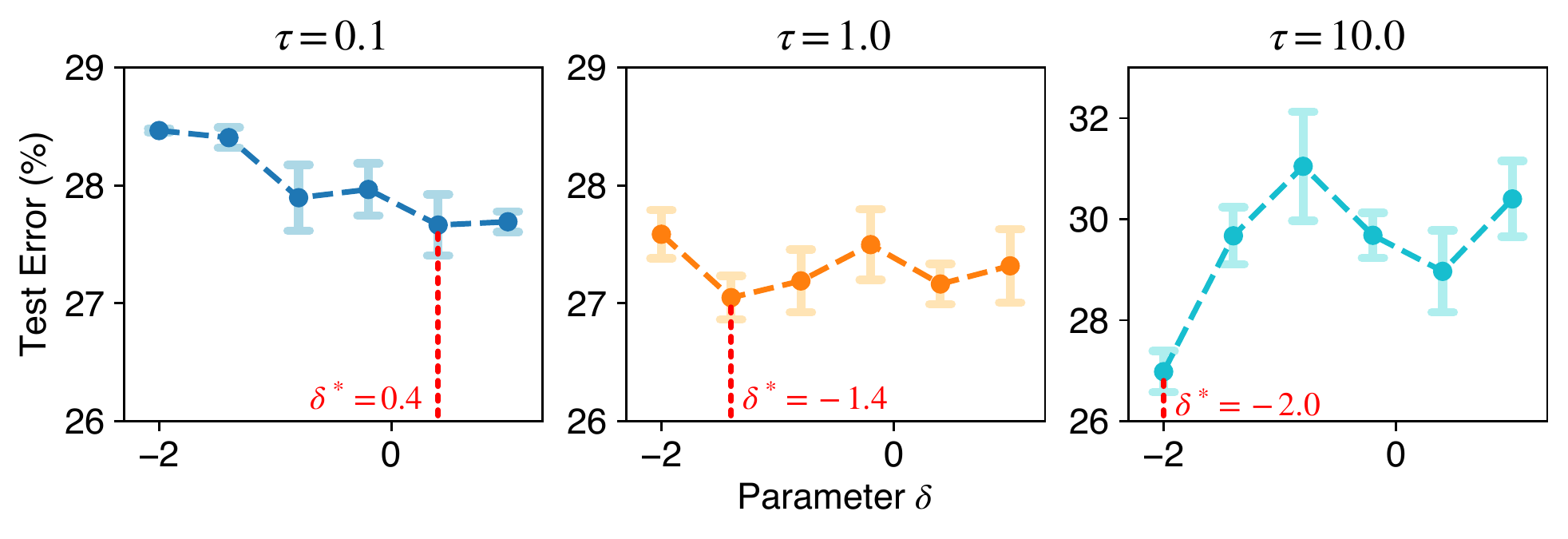}
\caption{The comparison of ensemble test error achieved by our NESBS (SVGD-RD) algorithm with varying $\delta$ under different softmax temperature $\tau$ given an ensemble size of $n=5$. We use $\delta^*$ to denote the optimal $\delta$ we obtained in our SVGD-RD algorithm under different temperature $\tau$. The test error for each $\delta$ is reported with the mean and standard error of five independent trials.}
\label{fig:optimal-diversity}
\end{figure}

\subsection{The Advantages of Controllable Diversity in SVGD-RD}\label{sec:exp-optimal-diversity}

To examine the advantages of controllable diversity in our SVGD-RD, we firstly compare the search effectiveness and efficiency achieved by our NESBS (MC Sampling) and NESBS (SVGD-RD) algorithm with varying softmax temperature $\tau$ (appeared in \eqref{eq:discrete-dist}). A larger temperature $\tau$ will lead to a flatter posterior distribution and hence degenerate its capability of characterizing single-model performances of neural architectures as indicated in \eqref{eq:discrete-dist}. We use these posterior distributions with varying temperature $\tau$ to simulate the possible posterior distributions we may obtain in practice. Figure \ref{fig:robust-tau} illustrates the comparison on CIFAR-10 in the DARTS search space with an ensemble size of $n=5$. Notably, our NESBS (SVGD-RD) with controllable diversity can consistently achieve improved search effectiveness and efficiency than our NESBS (MC Sampling). Interestingly, this improvement becomes larger in the case of $\tau=0.1,10.0$, which should be the consequences of a bad exploration and exploitation achieved by our NESBS (MC Sampling), respectively. These results therefore suggest that the controllable diversity in our SVGD-RD generally can lead to improved search effectiveness and efficiency than our NESBS (MC Sampling).

We further provide the comparison of ensemble test error achieved by our SVGD-RD with varying $\delta$ under different softmax temperature $\tau$ in Figure \ref{fig:optimal-diversity}. Notably, when the posterior distribution tends to be flatter (i.e., $\tau=10$), a smaller $\delta$ is preferred by our SVGD-RD in order to sample architectures with better single-model performances while maintaining the compelling diverse model predictions. Meanwhile, when this posterior distribution tends to be sharper (i.e., $\tau=0.1$), a larger $\delta$ is preferred by our SVGD-RD in order to sample architectures with a larger diverse model predictions while preserving the competitive single-model performances. Based on this controllable diversity and hence the controllable trade-off between the single-model performances and the diverse model predictions, our SVGD-RD is thus capable of achieving comparable performances under varying $\tau$, which usually improve over our NESBS (MC Sampling) by comparing them with the results in Figure~\ref{fig:robust-tau}. These results further validate the advantages of the controllable diversity in our SVGD-RD.
\end{appendices}

\end{document}